\title{Statistical Bootstrapping for Uncertainty Estimation \\ in Off-Policy Evaluation}
\author{%
  Ilya Kostrikov\thanks{Also at NYU.} \\
  Google Research \\
  \texttt{kostrikov@google.com} \\
  \And
  Ofir Nachum \\
  Google Research \\
  \texttt{ofirnachum@google.com} \\
}
\newcommand{\Bo}[1]{}
\newcommand{\comment}[1]{}
\newtheorem{theorem}{Theorem}
\newtheorem{assumption}{Assumption}
\newtheorem{definition}{Definition}
\def\eqref#1{(\ref{#1})}
\def\1{\bm{1}}
\DeclareMathAlphabet{\mathsfit}{\encodingdefault}{\sfdefault}{m}{sl}
\SetMathAlphabet{\mathsfit}{bold}{\encodingdefault}{\sfdefault}{bx}{n}
\newcommand{\Qhat}{\widehat{Q}}
\newcommand{\E}{\mathbb{E}}
\newcommand{\R}{\mathbb{R}}
\newcommand{\nbr}[1]{\left\|#1\right\|}
\renewcommand{\url}[1]{{\sffamily #1}}
\def\mdp{\mathcal{M}}
\def\visitpi{d^\pi}
\def\visitrb{d^\Dset}
\DeclareMathOperator{\avgstep}{\rho}
\def\Sset{\mathcal{S}}
\def\Aset{\mathcal{A}}
\def\Reward{\mathcal{R}}
\def\Transition{\mathcal{T}}
\def\Dset{\mathcal{D}}
\def\Dnset{\mathcal{D}_n}
\def\init{\mu_0}
\def\defeq{:=}
\def\Qhat{\widehat{Q}}
\renewcommand{\widehat}{\hat}
\def\Rmax{R_{\mathrm{max}}}
\def\Wmax{W_{\mathrm{max}}}
\def\visitDn{d^{\Dnset}}
\def\initDn{{\mu}_0^{\Dnset}}
\def\RewardDn{{\Reward}_{\Dnset}}
\def\TransitionDn{{\Transition}_{\Dnset}}
\def\avgstephat{\avgstep_{\mathrm{DM}}}
\def\Dtilde{\widetilde{\mathcal{D}}}
\def\visitDtilde{d^{\Dtilde}}
\def\visitpitilde{\visitpi_{\Dtilde}}
\def\initDtilde{\mu_0^{\Dtilde}}
\def\RewardDtilde{\Reward_{\Dtilde}}
\def\TransitionDtilde{\Transition_{\Dtilde}}
\def\AvgRewardDtilde{\overline{\Reward}_{\Dtilde}}
\def\Etilde{\widetilde{\mathcal{E}}}
\def\visitEtilde{d^{\Etilde}}
\def\initEtilde{\mu_0^{\Etilde}}
\def\TransitionEtilde{\Transition_{\Etilde}}
\def\AvgRewardEtilde{\overline{\Reward}_{\Etilde}}
\def\Qpitilde{Q^\pi_{\Dtilde}}
\def\RewardBDn{{\Reward}^\kappa_{\Dnset}}
\def\TransitionBDn{{\Transition}^\kappa_{\Dnset}}
\def\RewardBDtilde{{\Reward}^\kappa_{\Dtilde}}
\def\TransitionBDtilde{{\Transition}^\kappa_{\Dtilde}}
\def\Rewardprior{\Reward_{\mathrm{prior}}}
\def\Transitionprior{\Transition_{\mathrm{prior}}}
\def\avgstepbiased{\avgstephat^\kappa}
\def\rnoise{R_{\mathrm{noise}}}
\begin{document}

\maketitle

\begin{abstract}
In reinforcement learning, it is typical to use the empirically observed transitions and rewards to estimate the value of a policy via either model-based or $Q$-fitting approaches.
Although straightforward, these techniques in general yield biased estimates of the true value of the policy.
In this work, we investigate the potential for statistical bootstrapping to be used as a way to take these biased estimates and produce calibrated confidence intervals for the true value of the policy.
We identify conditions 
-- specifically, sufficient data size and sufficient coverage -- 
under which statistical bootstrapping in this setting is guaranteed to yield correct confidence intervals.
In practical situations, these conditions often do not hold, and so we discuss and propose mechanisms that can be employed to mitigate their effects.
We evaluate our proposed method and show that it can yield accurate confidence intervals in a variety of conditions, including challenging continuous control environments and small data regimes.  
\end{abstract}
\section{Introduction}

Providing accurate and trustworthy estimates of a policy's long term value in a decision-making environment is an important problem in reinforcement learning (RL).
Typically, due to cost or safety constraints, one must perform this estimation without actually running the policy in the live environment. Instead, one must predict the value of the policy using only a limited set of experience of some other logging (or behavior) policies acting in the sequential environment. 
This problem is generally referred to as \emph{off-policy evaluation} (OPE)~\cite{precup2000eligibility}.
The OPE problem is especially relevant to many practical domains, such as health~\cite{murphy2001marginal,liao2019off}, education~\cite{mandel2014offline}, and recommendation systems~\cite{Swaminathan17OP}, where accurate evaluation of a new policy is critical to maximize safety and minimize risks associated with deployment of a new policy~\cite{thomas2015safe}.

Perhaps the most straightforward approach to OPE is to use the given finite dataset of experience to determine the environment's empirically observed initialization, transition, and reward probabilities, and then to evaluate the expected value of the target policy in this \emph{empirical} environment. 
This straightforward approach is known as the \emph{direct method} (DM)~\cite{dudik2011doubly,voloshin2019empirical}.
In addition to encompassing \emph{model-based} (MB) methods~\cite{Thomas16DE,hanna2017bootstrapping}, this general paradigm is also implicitly implemented by \emph{$Q$-evaluation} (QE), or its parameteric counterpart \emph{fitted $Q$-evaluation} (FQE)~\cite{paine2020hyperparameter,voloshin2019empirical,bradtke1996linear}. Indeed, the mathematical equivalence of QE and MB, even under certain function approximation schemes, has been recently demonstrated~\cite{duan2020minimax}.

Although the DM paradigm is a straightforward and intuitive approach, it is traditionally seen as undesirable due to it yielding \emph{biased} estimates. That is, the estimates returned by QE or MB over multiple experiments on randomly sampled finite datasets \emph{are not} centered around the true value of the target policy.
This fact has led much of the OPE literature to focus on a variety of importance sampling (IS) based approaches~\cite{precup2000eligibility,li2011unbiased,jiang2015doubly,liu2018breaking,nachum2019dualdice}, for which unbiased estimates are feasible.
However, the ability to provide unbiased estimates is not necessary in many practical applications. Rather, in many practical scenarios where \emph{safety} is a key concern~\cite{thomas2015safe}, the ability to provide unbiased estimates is less relevant than the need for \emph{high-confidence} and accurate lower or upper bounds on the true value of the target policy.
Efron's bootstrap~\cite{efron1987better} is a well-known method in statistics for deriving confidence intervals from biased estimates, and so it may be a promising technique for use in conjunction with DM~\cite{hanna2017bootstrapping}. 
Still, while bootstrapping is a simple approach widely used in statistics, it is not always guaranteed to yield accurate confidence intervals~\cite{putter2012resampling, abadie2008failure}, and in the case of MB or QE, where the OPE estimate is a complex function of the input data, it is not immediately clear whether Efron's bootstrap would be valid.

In this paper, we investigate the validity of Efron's bootstrap applied to DM. We derive theoretical guarantees that show that, if certain conditions are satisfied, Efron's bootstrap applied to DM yields asymptotically accurate confidence intervals. 
The conditions we identify -- namely, sufficient sample size and sufficient coverage of the underlying experience data distribution -- may not hold in many practical scenarios.
Therefore, we use insights from our derivations to suggest mechanisms -- noisy rewards and regularization -- for mitigating the effect of these in practice.
We present empirical results in tabular settings that show the validity of our theory and the benefit of our heuristic mechanisms. 
Extending our methods to more complex environments with function approximation, we present state-of-the-art results, showing that MB and QE with Efron's bootstrap can yield accurate and useful confidence intervals on challenging continuous control benchmarks.
\section{Background}

We consider the standard Markov Decision Process (MDP) setting~\cite{puterman1994markov}, in which the environment is specified by a tuple $\mdp = \langle \Sset, \Aset, \Reward, \Transition, \init, \gamma \rangle$, consisting of a state space $\Sset$, an action space $\Aset$, a reward distribution function $\Reward$, a transition probability function $\Transition$, an initial state distribution $\init$, and a discount $0\le \gamma<1$.  A policy $\pi$ interacts with the environment iteratively, starting with an initial state $s_0 \sim \init$.
For simplicity, we will restrict the text to consider the infinite-horizon setting, although all results apply in the finite horizon setting as well.

In this work, we largely focus on estimation of the \emph{value} of a given target policy $\pi$, defined as the expected accumulated reward of $\pi$ in $\mdp$, averaged over time via $\gamma$-discounting:
\begin{equation}
    \avgstep(\pi) \defeq (1-\gamma)\cdot\E\left[\left.\sum_{t=0}^\infty \gamma^t\cdot r_t ~\right|~ s_0\sim\init,a_t\sim\pi(s_t),r_t\sim\Reward(s_t,a_t),s_{t+1}\sim\Transition(s_t,a_t)\right].
\end{equation}

We consider the \emph{off-policy} setting, in which we do not have explicit knowledge of $\Reward,\Transition,\init$. Rather, we only have access to a finite empirical dataset of experience samples from these distributions.
More concretely, we have a dataset $\Dnset\defeq\{(s_0^{(j)}, s^{(j)}, a^{(j)}, r^{(j)}, s^{\prime(j)})\}_{j=1}^n$ consisting of $n$ tuples $(s_0, s, a, r, s')$ independently sampled via
\begin{equation}
    \label{eq:def-d}
    s_0\sim\init~;~~~ (s, a)\sim\visitrb~;~~~ r\sim \Reward(s,a)~;~~~ s'\sim\Transition(s,a),
\end{equation}
where $\visitrb$ is some unknown distribution over state-action pairs. We will abuse notation at times and use $\visitrb(s_0,s,a,r,s')$ to denote the joint distribution on tuples and $\visitrb(s_0),\visitrb(r,s'|s,a)$ the appropriately marginalized and conditioned distributions. 
The finite dataset $\Dnset$ induces its own empirical distribution over tuples, which we denote 
\begin{equation}
    \visitDn\defeq \frac{1}{n}\sum_{j=1}^n \delta_{(s_0^{(j)}, s^{(j)}, a^{(j)}, r^{(j)}, s^{\prime(j)})},
\end{equation}
where $\delta_\chi$ is the Dirac delta distribution centered at $\chi$.
The empirical distribution over tuples $\visitDn$ in turn determines an empirical initial state distribution $\initDn(s_0)\defeq\visitDn(s_0)$, an empirical reward distribution function $\RewardDn(r|s,a)\defeq\visitDn(r|s,a)$, and an empirical transition probability function $\TransitionDn(s'|s,a)\defeq\visitDn(s'|s,a)$.
To appropriately define $\RewardDn,\TransitionDn$ when $\visitDn$ has poor coverage of the state or action space, we define $\RewardDn(r|s, a)\defeq \Rewardprior(r|s,a)$, $\TransitionDn(s'|s,a)\defeq \Transitionprior(s'|s,a)$ for all $s,a$ such that $\visitDn(s, a)=0$, for some fixed \emph{prior} distribution functions $\Rewardprior, \Transitionprior$. 

The \emph{direct method} (DM) uses the empirically observed $\initDn,\RewardDn,\TransitionDn$ to estimate $\avgstep(\pi)$ as
\begin{equation*}
\hspace{-2mm}
\avgstephat(\pi|\Dnset) \defeq (1-\gamma)\cdot\E\left[\left.\sum_{t=0}^\infty \gamma^t\cdot r_t ~\right|~ s_0\sim\initDn,a_t\sim\pi(s_t),r_t\sim\RewardDn(s_t,a_t),s_{t+1}\sim\TransitionDn(s_t,a_t)\right]. 
\end{equation*}
The direct method may be implemented explicitly through a \emph{model-based} (MB) procedure, where $\initDn,\RewardDn,\TransitionDn$ are either determined analytically or approximated by parameteric models via maximum likelihood. Then, $\avgstephat(\pi|\Dnset)$ is approximated by Monte Carlo trajectories of $\pi$ rolled out using these models.
Alternatively, DM can also be implemented in a model-free fashion via
\emph{$Q$-evaluation} (QE). In this approach, a $Q$-value function $Q:\Sset\times\Aset\to\R$ is iteratively learned via the Bellman backup procedure,
\vspace{-3mm}
\begin{equation}
    \label{eq:qe-backup}
    Q^{(i+1)}(s,a) \leftarrow \E_{\visitDn(r,s'|s,a),a'\sim\pi(s')}\left[r + \gamma Q^{(i)}(s',a')\right].
\end{equation}
Ignoring issues of function approximation, this procedure converges to a fixed point $\Qhat^\pi=\lim_{i\to\infty} Q^{(i)}$, which is the $Q$-value function of $\pi$ under the empirical MDP.\footnote{When $\visitDn$ has poor coverage, the fixed point $\hat{Q}^\pi$ depends on the initial $Q$-values $Q^{(0)}$. The fixed point $\hat{Q}^\pi$ is still the $Q$-value function of $\pi$ under the empirical MDP, where the prior reward and transition functions $\Rewardprior,\Transitionprior$ are \emph{implicitly} defined by the initialization of $Q^{(0)}$.} Once this fixed point is determined, the value of $\pi$ may be approximated as $\avgstephat(\pi|\Dnset)= (1-\gamma)\cdot\E_{\visitDn(s_0),a_0\sim\pi(s_0)}[\Qhat^\pi(s_0,a_0)]$.
When the iterative procedure in~\eqref{eq:qe-backup} is performed via a regression over parameterized $Q$, this procedure is known as \emph{fitted $Q$-evaluation} (FQE). The reader may look to~\cite{voloshin2019empirical} for a review of a variety of instantiations of the direct method.

Although DM via either MB or QE is straightforward, it generally yields \emph{biased} estimates of $\avgstep(\pi)$: 
\begin{equation}
    \avgstep(\pi) \ne \E_{\Dnset}[\avgstephat(\pi|\Dnset)].
\end{equation}
Still, unbiased estimates are not completely necessary in practical risk-sensitive applications, where one would rather have access to accurate confidence intervals, and the bias of a single point estimate is irrelevant.
In the statistics literature, Efron's bootstrap (Algorithm~\ref{alg:bootstrap}) is widely used to provide asymptotically accurate confidence intervals, even when point estimates of the statistic are biased, and doing the same for DM methods has been proposed in the past~\cite{hanna2017bootstrapping}.
However, Efron's bootstrap is not always guaranteed to yield accurate confidence intervals~\cite{putter2012resampling,abadie2008failure}. 
In this paper, we will investigate conditions under which Efron's bootstrap applied to DM is guaranteed to yield accurate confidence intervals, and suggest mechanisms to improve the validity of the confidence intervals when these conditions do not hold.

Before getting into our main contributions, we list a few useful assumptions. For ease of exposition, we state these assumptions and our theoretical results with respect to \emph{countable} sets $\Sset$ and $\Aset$; this allows us to avoid technical details from measure theory.
\begin{assumption}[Bounded rewards]
\label{assumption:bounded_rewards}
   The rewards of the MDP are bounded by some finite constant $\Rmax$: $\nbr{r}_\infty\le \Rmax$.
 \end{assumption}
For the next assumption, we make use of the discounted on-policy distribution $\visitpi$, which measures the likelihood of the policy $\pi$ encountering state-action pair $(s,a)$ when interacting with $\mdp$~\cite{nachum2020reinforcement}:
\begin{equation}
    \label{eq:dpi}
    \visitpi(s,a) \defeq (1-\gamma)\cdot\sum_{t=0}^\infty \gamma^t\cdot \Pr[s_t=s,a_t=a~|~\init,\pi,\Reward,\Transition].
\end{equation}
 \begin{assumption}[Sufficient data coverage]
 \label{assumption:bounded_ratios}
   There exists $\epsilon>0$ such that for any $(s,a)$, $\visitpi(s,a)>0$ implies $\visitrb(s,a)>\epsilon$.
 \end{assumption}
As we will discuss later, Assumption~\ref{assumption:bounded_ratios} is very strong and often not satisfied in practice (e.g., in infinite state or action spaces). 

\begin{algorithm}
   \caption{Efron's non-parameteric, bias-corrected bootstrap~\cite{efron1987better}.}
\begin{algorithmic}
    \label{alg:bootstrap}
   \STATE {\bf Inputs}: A functional $F$, a desired confidence $1-\alpha$, a finite sample dataset $\Dset_n\defeq\{(s_0^{(j)}, s^{(j)}, a^{(j)}, r^{(j)}, s^{\prime(j)})\}_{j=1}^n$, number of bootstraps $b$ to use for percentile calculation.
   \vspace{2mm}
   \STATE \emph{\#\# Note: $F$ is a function from distributions over $(s_0,s,a,r,s')$ to $\R$. When applied to a finite dataset $\Dtilde$, it is understood to be applied to the empirical distribution $\visitDtilde$ determined by $\Dtilde$.}
   \vspace{2mm}
   \STATE Compute empirical estimate $\hat{y}\defeq F(\Dset_n)$. 
   \STATE Create $b$ bootstrapped datasets $\{\Dset_n^{(k)}\}_{k=1}^b$, each of $n$ elements sampled uniformly from $\Dset_n$.
   \STATE Compute bootstrapped estimates $\hat{y}_1\defeq F(\Dset_n^{(1)}),\dots,\hat{y}_b\defeq F(\Dset_n^{(b)})$.
   \STATE Compute $\alpha/2$ and $1-\alpha/2$ quantiles $z_{\alpha/2},z_{1-\alpha/2}$ of $\{\hat{y}_k - \hat{y}\}_{k=1}^b$.
   \vspace{2mm}
   \STATE {\bf Return} $C\defeq [\hat{y} - z_{1-\alpha/2}, \hat{y} - z_{\alpha/2}]$.
\end{algorithmic}
\end{algorithm}

\section{Investigating the Validity of Efron's Bootstrap}
We begin by presenting a theoretical result showing the validity of using Efron's bootstrap based on estimates $\avgstephat(\pi|\Dnset)$ prescribed by the direct method.
\begin{theorem}[Correctness of DM with bootstrapping]
\label{theorem:qe}
Under Assumptions~\ref{assumption:bounded_rewards},\ref{assumption:bounded_ratios}, the use of Algorithm~\ref{alg:bootstrap} with $F(\visitDn)\defeq \avgstephat(\pi|\Dnset)$ yields confidence intervals $C(\visitDn)$ which are asymptotically correct, in the sense that
\begin{equation}
    \Pr[\rho(\pi)\in C(\visitDn)] = 1 - \alpha - O_p(n^{-1/2}),
\end{equation}
where $O_p$ is used to denote \emph{order in probability}.
Additionally, the one-sided confidence intervals are asymptotically correct at rate $O_p(n^{-1/2})$. 
These asymptotic rates may be improved by using more sophisticated bootstrapping methods in place of Algorithm~\ref{alg:bootstrap}, such as BCa or ABC~\cite{diciccio1996bootstrap}.
\end{theorem}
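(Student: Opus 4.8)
The plan is to show that, although $F$ is a complicated and even discontinuous functional of the empirical distribution, under Assumptions~\ref{assumption:bounded_rewards}--\ref{assumption:bounded_ratios} the direct-method estimate reduces -- on an event of probability $1-o(n^{-1/2})$ -- to a fixed \emph{real-analytic function of a finite-dimensional vector of i.i.d.\ sample means}. Once the estimator is placed in this ``smooth function of a mean'' model, asymptotic correctness of the percentile-type interval produced by Algorithm~\ref{alg:bootstrap}, at rate $O_p(n^{-1/2})$, is exactly the classical Edgeworth-expansion analysis of the bootstrap (see, e.g.,~\cite{efron1987better,diciccio1996bootstrap}), and the sharpening to $O_p(n^{-1})$ under BCa/ABC follows by matching one additional Edgeworth term; so the real work is the reduction.

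\emph{Step 1: the relevant MDP is finite.} Assumption~\ref{assumption:bounded_ratios} forces the set $\mathcal K\defeq\{(s,a):\visitpi(s,a)>0\}$ to be finite, since each of its elements carries $\visitrb$-mass $>\epsilon$ and $\visitrb$ is a probability distribution, so $|\mathcal K|<1/\epsilon$. From the definition~\eqref{eq:dpi}, $\mathcal K$ contains every state-action pair reachable under $\pi$ from $\init$: it contains all $(s_0,a_0)$ with $\init(s_0)\pi(a_0|s_0)>0$, and it is closed under the dynamics, because $(s,a)\in\mathcal K$, $\Transition(s'|s,a)>0$, $\pi(a'|s')>0$ imply $\visitpi(s',a')\ge\gamma\,\visitpi(s,a)\,\Transition(s'|s,a)\,\pi(a'|s')>0$. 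Writing $\mathcal S_{\mathcal K}\defeq\{s:\exists a,(s,a)\in\mathcal K\}$, it follows that $\rho(\pi)$ depends only on $\init,\Reward,\Transition$ restricted to $\mathcal K$, and equals $(1-\gamma)\,\vmu_0^\top(\mI-\gamma\mP^\pi)^{-1}\vr^\pi$ for the finite policy-induced matrices $\mP^\pi,\vr^\pi$; this is analytic in those parameters because $\mI-\gamma\mP^\pi$ is invertible for every row-stochastic $\mP^\pi$ when $\gamma<1$, and invertibility is an open condition.

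\emph{Step 2: reduction to a smooth function of a mean.} For each $(s,a)\in\mathcal K$, the count $N_{s,a}$ in $\Dnset$ is $\mathrm{Binomial}(n,\visitrb(s,a))$ with mean $>\epsilon n$; a Chernoff bound and a union bound over $\mathcal K$ give $\Pr[\min_{(s,a)\in\mathcal K}N_{s,a}<\epsilon n/2]\le\epsilon^{-1}e^{-cn}=o(n^{-1/2})$. On the complementary event the prior fallbacks $\Rewardprior,\Transitionprior$ are never invoked on $\mathcal K$, and -- since $\mathcal K$ is closed under the dynamics and $\initDn$ is supported on $\mathcal S_{\mathcal K}$ -- the MB/QE rollout of $\pi$ under the empirical model never leaves $\mathcal S_{\mathcal K}$, so $\avgstephat(\pi|\Dnset)$ is a deterministic function of only the finitely many empirical probabilities $\{\visitDn(s_0)\}_{s_0\in\mathcal S_{\mathcal K}}$ and $\{\visitDn(s,a,r,s')\}_{(s,a)\in\mathcal K}$. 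Collecting the corresponding indicator and reward-weighted-indicator variables into $X_j\in\R^d$, we obtain $\avgstephat(\pi|\Dnset)=g(\hat p_n)$ with $\hat p_n=\tfrac1n\sum_{j\le n}X_j$, $p^\star\defeq\E X_1$, $g$ analytic on a neighborhood of $p^\star$ (the open set where the $(s,a)$-marginals are positive and $\mI-\gamma\mP^\pi$ is invertible), and $g(p^\star)=\rho(\pi)$ by Step~1. Assumption~\ref{assumption:bounded_rewards} makes the $X_j$ bounded and hence all their moments finite; in particular the direct-method bias $\E[\avgstephat(\pi|\Dnset)]-\rho(\pi)$ is $O(n^{-1})$, negligible against the $O(n^{-1/2})$ sampling fluctuation -- which is precisely why bootstrap bias correction is effective here.

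\emph{Step 3 and main obstacle.} In the smooth-function model, $\sqrt n\,(g(\hat p_n)-\rho(\pi))\Rightarrow\mathcal N(0,\sigma^2)$ with $\sigma^2=\nabla g(p^\star)^\top\Cov(X_1)\nabla g(p^\star)$, Efron's resampling distribution reproduces the law of $g(\hat p_n)-\rho(\pi)$ up to $O_p(n^{-1/2})$, and comparing the Edgeworth expansions of the statistic and of its bootstrap counterpart yields coverage error $O_p(n^{-1/2})$ for $C(\visitDn)$ and for the one-sided intervals, with BCa/ABC recovering the extra term and hence $O_p(n^{-1})$ one-sided accuracy; the $o(n^{-1/2})$-probability exceptional event of Step~2 is absorbed into the stated rate. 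The hard part is Step~2: rigorously arguing that the genuinely non-smooth, infinite-dimensional functional $F$ -- discontinuous wherever the data fail to cover some $(s,a)$, and a priori a function of infinitely many cell probabilities -- collapses, purely by virtue of Assumption~\ref{assumption:bounded_ratios}, to one fixed finite-dimensional analytic map of i.i.d.\ means, together with checking the mild regularity needed to deploy the bootstrap Edgeworth machinery: nondegeneracy $\nabla g(p^\star)\neq 0$ (which fails only in trivial cases where the returned interval is itself degenerate and correct) and a Cram\'er-type condition on the law of $X_1$ (in a purely lattice setting one substitutes the corresponding lattice Edgeworth expansion, leaving the leading $O_p(n^{-1/2})$ rate unchanged).
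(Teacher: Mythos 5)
Your proposal is sound, but it takes a genuinely different route from the paper's. The paper stays in the infinite-dimensional functional-delta-method framework: it writes $F(\visitDtilde)=(1-\gamma)\cdot\AvgRewardDtilde^{T}(I-\gamma\Pi\TransitionDtilde)^{-1}\Pi\initDtilde$, computes the influence (Gateaux) derivatives with respect to the reward, initial-state, and transition components (invoking the policy gradient theorem for the last), shows $\|F'(\visitDtilde)\|_\infty=O\left(\|\visitpitilde/\visitDtilde\|_\infty\right)$, and uses Assumption~\ref{assumption:bounded_ratios} to bound this uniformly on an $L_\infty$ ball of radius $\epsilon/2$ around $\visitrb$, establishing local Hadamard differentiability and then citing general bootstrap-validity results. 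You instead exploit a structural consequence of Assumption~\ref{assumption:bounded_ratios} that the paper leaves implicit: the on-policy support $\mathcal{K}$ has at most $1/\epsilon$ elements and is closed under the dynamics, so on an event of probability $1-O(e^{-cn})$ the estimator collapses to a fixed analytic function of a finite vector of bounded i.i.d.\ sample means, at which point Hall's Edgeworth analysis of the smooth-function model applies verbatim. Each approach buys something. Your reduction gives a more direct justification of the quantitative $O_p(n^{-1/2})$ and BCa/ABC rates, since these are actually proved in the smooth-function-of-means setting, whereas the Hadamard-differentiability theorems the paper cites deliver consistency more readily than rates; it also makes the finiteness forced by Assumption~\ref{assumption:bounded_ratios} explicit, where the paper merely assumes finite spaces ``for ease of notation.'' The paper's influence-function computation, in turn, exposes the ratio $\visitpi/\visitrb$ that explains the failure mode of Theorem~\ref{theorem:bounded_ratios} and carries over to the regularized estimator of Theorem~\ref{theorem:regularized}, where no finiteness is available and your reduction would not apply. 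Two points to tighten: (i) a bootstrap resample $\Dset_n^{(k)}$ can miss a cell of $\mathcal{K}$ even when $\Dset_n$ does not, so the Chernoff/union argument must also be applied conditionally to the resampling distribution (again contributing only $O(e^{-cn})$); (ii) the nondegeneracy and Cram\'er/lattice caveats you flag are genuine hypotheses of the Edgeworth machinery --- the paper's proof needs, and asserts with little argument, the analogous condition $0<\E\left[L_{\Dset}(\delta-\visitrb)^2\right]$ --- so neither proof fully discharges them, but you are at least explicit about it.
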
   
\begin{proof}
(Sketch) First, it is clear by the definition of $\visitrb$ in~\eqref{eq:def-d} and Assumption~\ref{assumption:bounded_ratios} that $F(\visitrb)=\avgstep(\pi)$. Thus it is left to show that bootstrap yields correct intervals around $F(\visitrb)$.
Sufficient conditions for correctness of Efron's bias-corrected bootstrap are known, and they are given by smoothness (specifically, Hadamard differentiability\footnote{See the appendix for a definition of Hadamard differentiability.}) of the functional $F$ evaluated in a neighborhood (i.e., a sufficiently small $L_\infty$ ball) around the true distribution $\visitrb$~\cite{wasserman2006all,politis2012subsampling,hall2013bootstrap}.
In the appendix, we show that under the assumption of bounded rewards (Assumption~\ref{assumption:bounded_rewards}) the derivative $F'(\visitDtilde)$ for general distribution $\visitDtilde$ satisfies 
\begin{equation}
\label{eq:f-deriv1}
||F'(\visitDtilde)||_\infty = O\left(||\visitpitilde/\visitDtilde||_\infty\right),
\end{equation}
where $\visitpitilde$ is the discounted on-policy distribution of $\pi$ under $\initDtilde,\RewardDtilde,\TransitionDtilde$.
When $\visitDtilde=\visitrb$, we have $||F'(\visitrb)||_\infty = O\left(||\visitpi/\visitrb||_\infty\right)$.
In the appendix, we show that $||\visitpitilde/\visitDtilde||_\infty$ is bounded within a sufficiently small neighborhood of $\visitrb$, given sufficient coverage of $\visitrb$ (Assumption~\ref{assumption:bounded_ratios}), and this completes the proof.
\end{proof}
\vspace{-3mm}

Although necessary conditions for the validity of Efron's bootstrap are not known in general, Hadamard differentiability is the key property typically used to prove validity. 
Our derivations make it clear that Assumption~\ref{assumption:bounded_ratios} is necesssary to ensure Hadamard differentiability of $F$; otherwise, a small change in $\visitrb$ may take $\visitpi$ out of the support of $\visitrb$, causing divergence in the derivative~\eqref{eq:f-deriv1}.
In contrast, a weaker variant of this assumption, $||\visitpi/\visitrb||_\infty=\Wmax<\infty$, which appears in previous OPE literature~\cite{nachum2019dualdice}, is not sufficiently strong to guarantee differentiability in the neighborhood of $\visitrb$. %
We encapsulate this in the following theorem.
\begin{theorem}[Necessity of Assumption~\ref{assumption:bounded_ratios}]
\label{theorem:bounded_ratios}
Suppose Assumption~\ref{assumption:bounded_rewards} holds and define functional $F(\visitDn)\defeq \avgstephat(\pi|\Dnset)$. There exists $\visitrb$ with uniformly bounded ratios $||\visitpi/\visitrb||_\infty=\Wmax<\infty$ such that $F$ is not Hadamard differentiable within any neighborhood of $\visitrb$.
\end{theorem}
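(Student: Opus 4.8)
\emph{Proof plan.} I would prove this by an explicit counterexample. The plan is to build a countable-state MDP together with a data distribution $\visitrb$ that has uniformly bounded ratios, $\|\visitpi/\visitrb\|_\infty=\Wmax<\infty$, and then to show that inside every $L_\infty$-ball around $\visitrb$ there is a distribution at which $F(\visitDn)\defeq\avgstephat(\pi\,|\,\Dnset)$ is not even continuous, hence not Hadamard differentiable. The conceptual point is the one anticipated after Theorem~\ref{theorem:qe}: uniform boundedness of $\visitpi/\visitrb$ only forces $\visitrb(s,a)\ge\visitpi(s,a)/\Wmax$, so as soon as the support of $\visitpi$ is infinite we must have $\inf\{\visitrb(s,a):\visitpi(s,a)>0\}=0$. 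An $L_\infty$-perturbation of arbitrarily small size can therefore knock a state-action pair in the support of $\visitpi$ out of the data support; at the perturbed distribution the direct method replaces the conditionals at that pair by the fixed prior $\Rewardprior,\Transitionprior$, while a further, arbitrarily small perturbation that reinserts the pair with a different reward makes $F$ jump. This is the divergence of the derivative in~\eqref{eq:f-deriv1} made concrete: the ``ratio'' $\|\visitpitilde/\visitDtilde\|_\infty$ becomes infinite the instant a reachable pair leaves the support of $\visitDtilde$.

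\emph{The construction.} Take $\Sset=\{s_1,s_2,\dots\}\cup\{\varnothing\}$ with a single action $a$ (so the policy plays no role), $\varnothing$ absorbing with reward $0$, and $\Reward(s_k,a)=\delta_0$, $\Transition(s_k,a)=\delta_\varnothing$ for every $k$; all true rewards vanish, so $\rho(\pi)=0$. Fix weights $q_k>0$ with $\sum_k q_k=1$ and $q_k\to 0$ (for instance $q_k=2^{-k}$) and let $\init=\sum_k q_k\,\delta_{s_k}$, so $\visitpi(s_k,a)=(1-\gamma)q_k$ and $\visitpi(\varnothing,a)=\gamma$. Fix $\Wmax>1$ and let $\visitrb$ carry the true conditionals $\visitrb(r,s'\mid s,a)=\Reward(r\mid s,a)\,\Transition(s'\mid s,a)$, the true initial marginal $\visitrb(s_0)=\init(s_0)$ (recall $s_0$ is sampled independently of $(s,a,r,s')$), state-action marginal $\visitrb(s_k,a)=(1-\gamma)q_k/\Wmax$, and the remaining mass $1-(1-\gamma)/\Wmax$ on $(\varnothing,a)$. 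One checks routinely that $\visitrb$ is a probability distribution covering the support of $\visitpi$, that $F(\visitrb)=\rho(\pi)=0$, and that $\|\visitpi/\visitrb\|_\infty=\Wmax$ (the bound is attained at every $(s_k,a)$, while $\visitpi(\varnothing,a)/\visitrb(\varnothing,a)<\Wmax$ because so much mass was placed on $\varnothing$). Finally, since the prior's value $v_k$ that the direct method assigns to $(s_k,a)$ when that pair is uncovered satisfies $|v_k|\le\Rmax/(1-\gamma)$, we may fix a reward value $r^*\in[-\Rmax,\Rmax]$ with $r^*\ne v_k$ for the $k$ chosen below (such an $r^*$ always exists).

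\emph{The perturbation and conclusion.} Fix $\eta>0$ and pick $k$ with $\visitrb(s_k,a)<\eta$. Let $Q_k$ be obtained from $\visitrb$ by deleting, for every $s_0$, the tuple $(s_0,s_k,a,0,\varnothing)$ and reassigning its mass $\visitrb(s_0)\visitrb(s_k,a)$ to $(s_0,\varnothing,a,0,\varnothing)$; this keeps $Q_k$ a probability distribution, preserves $\init$, and moves every coordinate by less than $\visitrb(s_k,a)<\eta$, so $\|Q_k-\visitrb\|_\infty<\eta$. At $Q_k$ the pair $(s_k,a)$ has zero data mass but is still reached by $\pi$ with discounted visitation $(1-\gamma)q_k>0$, so the direct method uses the prior there and $F(Q_k)=(1-\gamma)q_k\,v_k$. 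Now move along the ray $Q_k+t\,h$, $t>0$, with the finite-norm tangent direction
\[
h=\delta_{(s_1,\,s_k,\,a,\,r^*,\,\varnothing)}-\delta_{(s_1,\,\varnothing,\,a,\,0,\,\varnothing)},\qquad \|h\|_\infty=1,
\]
which keeps the iterate a probability distribution for small $t$ and leaves $\init$ and every empirical conditional except the one at $(s_k,a)$ unchanged. For every $t>0$ the pair $(s_k,a)$ now carries data mass $t$ with empirical reward $\delta_{r^*}$ and transition $\delta_\varnothing$, whence $F(Q_k+t\,h)=(1-\gamma)q_k\,r^*$ identically in $t>0$. Since $r^*\ne v_k$ and $q_k>0$, $[F(Q_k+t\,h)-F(Q_k)]/t\to\pm\infty$ as $t\to 0^+$, so $F$ is not Gateaux differentiable at $Q_k$ in the direction $h$ and \emph{a fortiori} not Hadamard differentiable there. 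As $\eta>0$ was arbitrary and $Q_k$ lies in the $\eta$-ball about $\visitrb$, $F$ fails to be Hadamard differentiable within any neighborhood of $\visitrb$.

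\emph{Expected main obstacle.} The only delicate part is the bookkeeping for $Q_k$ and the ray: the deleted-and-reinserted distribution must simultaneously remain a genuine probability measure, move by only $O(\eta)$ in $L_\infty$, and leave $\init$ and all empirical conditionals except at $(s_k,a)$ untouched, so that $F$ moves \emph{only} through the prior-versus-$\delta_{r^*}$ discrepancy at $(s_k,a)$. The last requirement is bought by the product structure of the data distribution in $s_0$ versus $(s,a,r,s')$; without exploiting it, deleting the $(s_k,a)$ tuples would also perturb $\init$ and contaminate the value. Secondary care points are confirming $\|\visitpi/\visitrb\|_\infty=\Wmax$ exactly (hence the deliberate placement of leftover mass on $\varnothing$) and checking that $r^*$ can be chosen in $[-\Rmax,\Rmax]$ away from $v_k$, which is automatic since $r^*$ ranges over an interval while $v_k$ is a single number.
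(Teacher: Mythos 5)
Your proposal is correct and takes essentially the same route as the paper's proof: an infinite-state, single-action MDP whose data distribution has summable tail masses, so that within any $L_\infty$ ball a state-action pair reachable by $\pi$ can be pushed out of the data support, and the resulting prior-versus-empirical jump in $F$ at that pair destroys differentiability. The only cosmetic difference is that the paper exhibits two sequences $P_n\to P$ whose difference quotients converge to different limits ($0$ versus $\infty$), whereas you show divergence of the quotient along a single fixed direction (a Gateaux failure); both violate the paper's definition of Hadamard differentiability.
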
   
\begin{proof}
See the appendix.
\end{proof}
Theorem~\ref{theorem:bounded_ratios} is somewhat disappointing, as Assumption~\ref{assumption:bounded_ratios} is strong and often not satisfied in practice; in continuous state or action settings, it is almost never satisfied.

In addition to the need for Assumption~\ref{assumption:bounded_ratios}, the other major lacking of Theorem~\ref{theorem:qe} is that it only guarantees correct intervals \emph{asymptotically}. For any finite $n$, the confidence intervals yielded by Efron's bootstrap will generally exhibit \emph{under-coverage}, and in practice this can lead to overly confident confidence intervals. Indeed, in the extreme case of $n=1$, there will be no variation in the boostrapped estimates of $\avgstep$ leading to confidence intervals $C(\visitDn)$ that are single points.

In the following subsections, we elaborate on our suggested mechanisms for appropriately compensating for these two main theoretical shortcomings of Efron's bootstrap applied to DM.

\subsection{Regularizations for Insufficient Coverage}
To better understand the need for sufficient coverage, we can look at a simple scenario illustrated in Figure~\ref{fig:trajectories}.
If the data distribution includes $s_2$ but does not cover the action $\pi(s_2)$ chosen by the policy, then the estimates $\RewardDn(s_2,\pi(s_2)),\TransitionDn(s_2,\pi(s_2))$ will be set to the priors $\Rewardprior(s_2,\pi(s_2)),\Transitionprior(s_2,\pi(s_2))$.
However, if the data distribution includes state action pair $(s_2, \pi(s_2))$ with even a tiny probability, the estimates $\RewardDn(s_2,\pi(s_2)),\TransitionDn(s_2,\pi(s_2))$ are changed to their empirical estimates.
In general, this change is not smooth (i.e., not Hadamard differentiable) with respect to the underlying data distribution, and this leads to issues with the validity of Efron's bootstrap applied to $\avgstephat$.

\begin{wrapfigure}{R}{0.3\textwidth}
\begin{minipage}{0.3\textwidth}
\begin{center}
\vspace{-5mm}
\includegraphics[width=0.99\columnwidth]{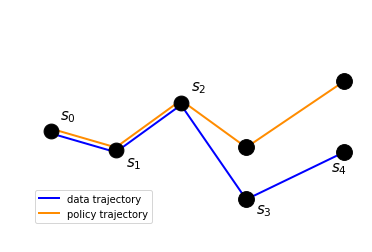}
\vspace{-5mm}
\end{center}
\caption{Trajectories of the policy may diverge from trajectories in the data.}
\vspace{-15mm}
\label{fig:trajectories}
\end{minipage}
\end{wrapfigure}

It is thus clear that to ensure validity of Efron's bootstrap, we require estimates $\RewardDn,\TransitionDn$ that are smoother around $\visitDn(s,a)\approx 0$.
For example, smoother empirical reward and transition functions may be found by defining biased reward and transitions in terms of some fixed $\kappa>0$,
\begin{align}
    \RewardBDn(r|s,a) &\defeq \frac{\visitDn(s, a, r) + \kappa\cdot\Rewardprior(r|s,a)}{\visitDn(s, a) + \kappa} \\
    \TransitionBDn(s'|s,a) &\defeq \frac{\visitDn(s, a, s') + \kappa\cdot\Transitionprior(s'|s,a)}{\visitDn(s, a) + \kappa}.
\end{align}
These biased functions would yield a \emph{regularized} DM estimate:
\begin{equation}
    \label{eq:biased-dm}
    \avgstepbiased(\pi|\Dnset) \defeq (1-\gamma)\cdot \E\left[\left.\sum_{t=0}^\infty \gamma^t\cdot r_t~\right|~ \initDn,\pi,\RewardBDn,\TransitionBDn\right].
\end{equation}
This estimator is provably amenable to statistical bootstrapping regardless of data coverage, although at the cost of providing intervals for a \emph{biased} estimate of $\avgstep(\pi)$, as stated by the following theorem.
\begin{theorem}[Correctness of regularized DM with bootstrapping]
\label{theorem:regularized}
Under Assumption~\ref{assumption:bounded_rewards}, the use of Algorithm~\ref{alg:bootstrap} with $F(\visitDn)\defeq \avgstepbiased(\pi|\Dnset)$ yields confidence intervals $C(\visitDn)$ which are asymptotically correct, in the sense that
\begin{equation}
    \Pr[\avgstepbiased(\pi|\visitrb)\in C(\visitDn)] = 1 - \alpha - O_p(n^{-1/2}).
\end{equation}
As for Theorem~\ref{theorem:qe}, the one-sided intervals converge at a rate $O_p(n^{-1/2})$ and these rates may be improved by using more sophisticated bootstrapping methods.
\end{theorem}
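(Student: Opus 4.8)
The plan is to follow the same template as the proof of Theorem~\ref{theorem:qe}, with the fixed smoothing constant $\kappa>0$ taking over the role that the coverage constant $\epsilon$ of Assumption~\ref{assumption:bounded_ratios} played there. First I would note that, directly from the definitions in~\eqref{eq:def-d} and~\eqref{eq:biased-dm}, $F(\visitrb)=\avgstepbiased(\pi|\visitrb)$, so it suffices to show that Efron's bias-corrected bootstrap produces asymptotically correct intervals around $F(\visitrb)$. By the cited results~\cite{wasserman2006all,politis2012subsampling,hall2013bootstrap}, this holds at rate $O_p(n^{-1/2})$ as soon as $F$ is Hadamard differentiable on a sufficiently small $L_\infty$ ball about $\visitrb$ with a derivative that is uniformly bounded on that ball; Assumption~\ref{assumption:bounded_rewards} guarantees that $F$ is finite and well defined in the first place.

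Next I would establish the differentiability by writing $F$ as a composition. Entrywise, $\RewardBDtilde$ and $\TransitionBDtilde$ are ratios whose numerators are affine functionals of $\visitDtilde$ (such as $\visitDtilde(s,a,r)+\kappa\Rewardprior(r|s,a)$) and whose denominators $\visitDtilde(s,a)+\kappa$ are affine functionals bounded below by $\kappa>0$ \emph{uniformly in} $\visitDtilde$; such maps are Hadamard differentiable with derivative of operator norm $O(1/\kappa)$, independently of the coverage of $\visitDtilde$. Composing with the classical value map $(\init,\Reward,\Transition)\mapsto\avgstep(\pi)$ (with $\pi,\gamma$ held fixed) --- which is smooth because $\avgstep(\pi)$ is a rational function of the transition probabilities through the resolvent $(I-\gamma P^\pi)^{-1}$ of the policy-induced transition matrix $P^\pi$, invertible for every stochastic $P^\pi$ since $\gamma<1$ --- the chain rule then yields Hadamard differentiability of $F$ at every $\visitDtilde$.

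It then remains to bound $||F'(\visitDtilde)||_\infty$ uniformly over a neighborhood of $\visitrb$. Reusing the appendix computation behind~\eqref{eq:f-deriv1}, but with the $\kappa$-smoothed denominators in place of $\visitDtilde(s,a)$, I would obtain $||F'(\visitDtilde)||_\infty = O(\Rmax/((1-\gamma)\kappa))$; the key point is that these smoothed denominators are bounded below by $\kappa$ for \emph{every} $(s,a)$, so --- unlike in Theorem~\ref{theorem:qe} --- no control of $||\visitpitilde/\visitDtilde||_\infty$, and hence no coverage assumption, is needed, and the resulting bound is uniform in $\visitDtilde$ and in particular holds on any $L_\infty$ ball about $\visitrb$. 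Feeding this into the cited bootstrap-consistency theorems yields $\Pr[\avgstepbiased(\pi|\visitrb)\in C(\visitDn)] = 1-\alpha-O_p(n^{-1/2})$, and the same argument applied to the one-sided functionals gives the one-sided rate; sharper rates follow from replacing Algorithm~\ref{alg:bootstrap} with BCa or ABC.

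The step I expect to be the main obstacle is the careful verification of the composition: even though $\kappa$-smoothing removes the coverage-induced non-smoothness that made Theorem~\ref{theorem:qe} delicate, one must still check genuine Hadamard (not merely G\^ateaux) differentiability of the value map along the perturbation sequences that actually arise, and confirm that the induced on-policy distribution $\visitpitilde$ --- which itself moves with $\visitDtilde$ --- stays under control. The uniform floor $\kappa$ is precisely what makes this bookkeeping, and the uniform derivative bound, go through.
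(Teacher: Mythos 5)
Your proposal is correct and follows essentially the same route as the paper's proof: both reduce the problem to Hadamard differentiability of $F$ near $\visitrb$, reuse the partial derivatives of $F$ with respect to the (regularized) reward and transition functions from the proof of Theorem~\ref{theorem:qe}, and apply the chain rule, with the uniform floor $\visitDtilde(s,a)+\kappa\ge\kappa$ in the denominators replacing the coverage bound of Assumption~\ref{assumption:bounded_ratios}. The only cosmetic difference is that the paper states and proves a generalized version for arbitrary smoothed $\RewardBDtilde,\TransitionBDtilde$ with uniformly bounded derivatives, of which your explicit $O(1/\kappa)$ computation is the concrete instance.
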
   
\begin{proof}
See the appendix.
\end{proof}

For succinctness, we have expressed Theorem~\ref{theorem:regularized} in terms of the specific $\RewardBDn,\TransitionBDn$ defined above. In general, the guarantees of the theorem hold for any suitably smooth $\RewardBDn,\TransitionBDn$, i.e., reward and transition functions that are locally differentiable around $\visitrb$; see the appendix for details.
This more general result is promising for function approximation settings.
In such settings, when using model-based evaluation or fitted $Q$-evaluation, it is straightforward to smooth out the estimated reward and transition functions via a number of standard regularizations.
For example, in our experiments with neural network function approximators, we utilize standard weight decay, which acts as a regularization towards prior reward and transition functions implicitly defined by the network structure.

\subsection{Noisy Rewards}
Even with sufficient coverage or appropriate regularization, the computed confidence intervals will generally be over-confident and under-cover the true value, especially in low-data regimes. This is due to the fact that for finite $n$, the empirical variance of the functional $F$ over the bootstrapped datasets is in general an underestimate of the true variance. %

To incorporate additional variance, we propose to augment the dataset $\Dnset$ via perturbations applied to observed rewards,
\begin{multline}
    ~~~~~~~~~~~~~~~~~~~~~~~~~\widetilde{\Dset}_n \leftarrow \Dnset~~\cup~~ \{(s_0, s, a, r + \rnoise, s')~|~(s_0, s, a, r, s')\in\Dnset\} \\ \cup~~ \{(s_0, s, a, r - \rnoise, s')~|~(s_0, s, a, r, s')\in\Dnset\}.
\end{multline}
Note that the variance of the empirical dataset is increased to $\mathrm{Var}_{\widetilde{\Dset}_n}[r] = \frac{2}{3}\rnoise^2 + \mathrm{Var}_{\Dnset}[r]$.
Given the augmented dataset $\widetilde{\Dset}_n$, one may perform Algorithm~\ref{alg:bootstrap} as-is, sampling $b$ bootstrapped datasets each of $n$ elements.
This same technique of augmenting a dataset with noisy rewards has been used in the bandit literature as a way to perform better exploration~\cite{kveton2019perturbed,kveton2018garbage}. As in this previous literature, a large enough $\rnoise \ge \sqrt{\frac{3}{2}}\cdot(1-\gamma)^{-1}\cdot\Rmax$ would be sufficient to compensate for the inherent under-coverage in bootstrapping, although in practice a much smaller $\rnoise$ can still yield good coverage. 

With noisy rewards, we are able to compensate for the under-coverage of Theorems~\ref{theorem:qe} and~\ref{theorem:regularized}. However, this generally comes at the cost of over-coverage. In practice, the parameter $\rnoise$ provides a way to trade-off between safety in small data regimes and looseness of the confidence intervals.
In our experiments, we found that setting $\rnoise=0.25\cdot\sqrt{\mathrm{Var}_{\Dnset}[r]}$ provides a reasonable trade-off for our considered environments.
\section{Related Work}
Our paper focuses on producing confidence bounds for off-policy evaluation and therefore follows a long line of work on \emph{high-confidence policy evaluation} (HCOPE)~\cite{Thomas15HCPE}. 
Many of the existing methods for HCOPE focus on importance sampling (IS) based estimators, in which the rewards of a trajectory are re-weighted according to an inverse propensity ratio to yield an unbiased estimate of $\avgstep(\pi)$~\cite{precup2000eligibility}.
Given a dataset with several trajectories, one may derive several unbiased estimates and then use concentration inequalities to derive high-confidence lower and upper bounds on the true average~\cite{Thomas15HCPE}. Since these concentration inequalities typically require unbiased estimates, they are not applicable to the direct method. %

In terms of statistical bootstrapping, there have been several instances of its use for off-policy evaluation. Specifically,~\cite{Thomas15HCPI} combined statistical bootstrapping with IS to derive OPE confidence intervals. Unlike for DM, the validity of Efron's bootstrap with IS is straightforward, since the functional $F$ in this case is the standard mean.
We are aware of one previous instance in which statistical bootstrapping was used for high-confidence policy evaluation with DM; specifically,~\cite{hanna2017bootstrapping} proposes to use Efron's bootstrap in conjunction with model-based learning, similar to the present work. However, the validity of using Efron's bootstrap is not addressed in this previous work. 
The theoretical investigation we presented is a key contribution of our paper.
Notably, we found that the use of Efron's bootstrap directly is misguided without the use of strong assumptions, or alternatively, as we suggest, the use of mechanisms like regularization and noisy rewards.
Furthermore, our experimental work presents strong results on continuous control benchmarks, while previous work mostly focuses on tabular domains.

Outside of the narrow scope of HCOPE, the ideas behind Efron's bootstrap have inspired a number of existing RL algorithms. 
Specifically, statistical bootstrapping has been proposed as a mechanism for exploration; e.g., bootstrapped DQN~\cite{osband2016deep,osband2017deep}.
However, in practice, the type of bootstrapping performed in these algorithms is far from that prescribed by Efron's bootstrap. 
Usually, an ensemble of models is learned over the whole dataset, without any re-sampling or bias correction, and thus the theory behind bootstrap does not readily apply.
Although this simple paradigm has achieved impressive results on hard exploration environments~\cite{nachum2019does}, in our initial experiments for off-policy evaluation we found the naive ensembling approach to yield poor confidence intervals.
In the bandits literature, ideas from statistical bootstrapping have also been investigated as an exploration mechanism~\cite{kveton2018garbage,hao2019bootstrapping}. 
While we have focused on policy evaluation, extending the insights and derivations of the present paper to propose better algorithms for exploratory policy learning (or, conversely, safe policy learning) is an interesting avenue for future work.
\section{Experiments}

We evaluate our methods first in a discrete tabular domain, where we investigate how well the coverage of the estimated bootstrap intervals matches the intended coverage and show how reward noise can assist in low-data regimes. 
Sufficient coverage is not much of an issue in finite domains,\footnote{In finite domains, Assumption~\ref{assumption:bounded_ratios} reduces to $\visitpi(s,a)>0\Rightarrow \visitrb(s,a)>0$.} and so we continue to a more difficult set of continuous control tasks from OpenAI Gym~\cite{brockman2016openai}, where we evaluate the use of appropriately regularized function approximators in conjunction with bootstrapping and noisy rewards.

\subsection{Tabular Tasks}
We use Frozen Lake as a discrete domain for tabular experiments. 
In this environment, the agent navigates in a discrete world from a start state to a goal state. The environment dynamics are stochastic and some actions lead to episode terminations. We use $\gamma=0.999$. We use a target policy that is near-optimal in this domain. We collect an experience dataset using a behavior policy derived as the target policy injected with 0.2 $\epsilon$-greedy noise (this reduces the value of the policy $\avgstep(\pi)$ from about $0.0007$ to about $0.0002$). 
For this task, policy evaluation with DM with either MB or QE can be equivalently solved using the exact tabular method, so we plot a single variant labelled DM.

We present empirical results in Figure~\ref{fig:lake}. 
We plot the results of using Efron's bootstrap with DM to construct confidence intervals with confidence $1-\alpha$ across a number of dataset sizes.
The results here show empirical coverage of the estimated confidence intervals, as measured over 200 randomly sampled datasets (each dataset is then resampled repeatedly for computing bootstrap estimates).
We find that DM with bootstrapping is able to achieve near-correct empirical coverage as the dataset size grows. As suggested by the theory, the bootstrap typically underestimates the desired coverage, and this is severe in low-data regimes (when the number of episodes is less than $50$).
 
\begin{figure}[h]
 \begin{center}
 \setlength{\tabcolsep}{0pt}
 \renewcommand{\arraystretch}{0.7}
 \begin{tabular}{ccc}
 \includegraphics[width=0.3\columnwidth]{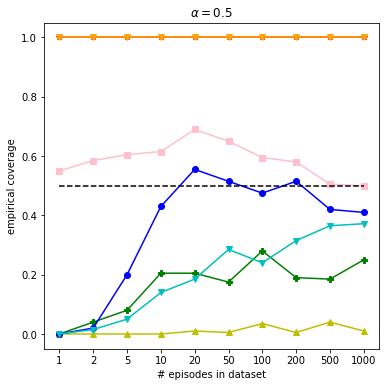} &
 \includegraphics[width=0.3\columnwidth]{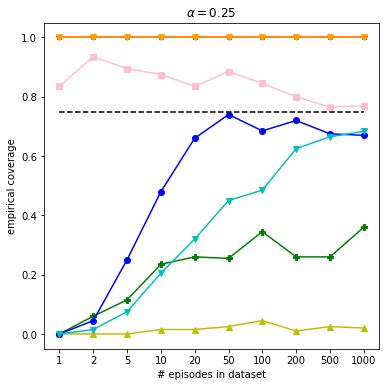} &
 \includegraphics[width=0.3\columnwidth]{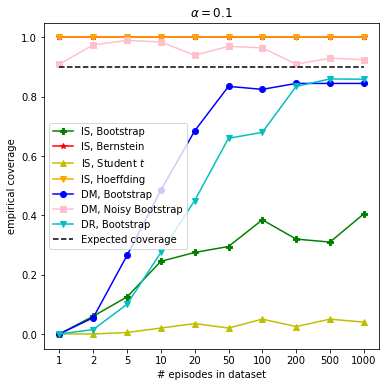} 
 \end{tabular}
\end{center}
\vspace{-5mm}
 \caption{Results on Frozen Lake across different confidences $1-\alpha$. Each plot shows the proportion of times the estimated confidence interval covers the true value of the policy, as measured over 200 separate trials.}
 \label{fig:lake}
 \end{figure}
 
We show the results of using noisy rewards to combat this low-data issue.
We perturb the rewards with $\rnoise=0.25\cdot\sqrt{\mathrm{Var}_{\Dnset}[r]}$.
The resulting difference in performance in the low-data regime is striking;
DM with noisy bootstrap is able to yield near-optimal coverage, although, as expected, it typically slightly overestimates the desired coverage.

As a point of comparison, we plot a number of other high-confidence policy evaluation methods: IS with bootstrapping, IS with empirical Bernstein's, IS with Student's $t$, IS with Hoeffding's, and doubly robust (DR) IS with bootstrapping (see~\cite{Thomas15HCPE,Thomas15HCPI,hanna2017bootstrapping}). We find that all of these previous methods mostly either severely underestimate or severely overestimate the desired coverage.
There is a potential for our proposed noisy rewards to be beneficial for some of these baselines as well (e.g., DR bootstrap), and this is a promising avenue for future work.

 \begin{figure}[h]
 \begin{center}
  \includegraphics[scale=0.35]{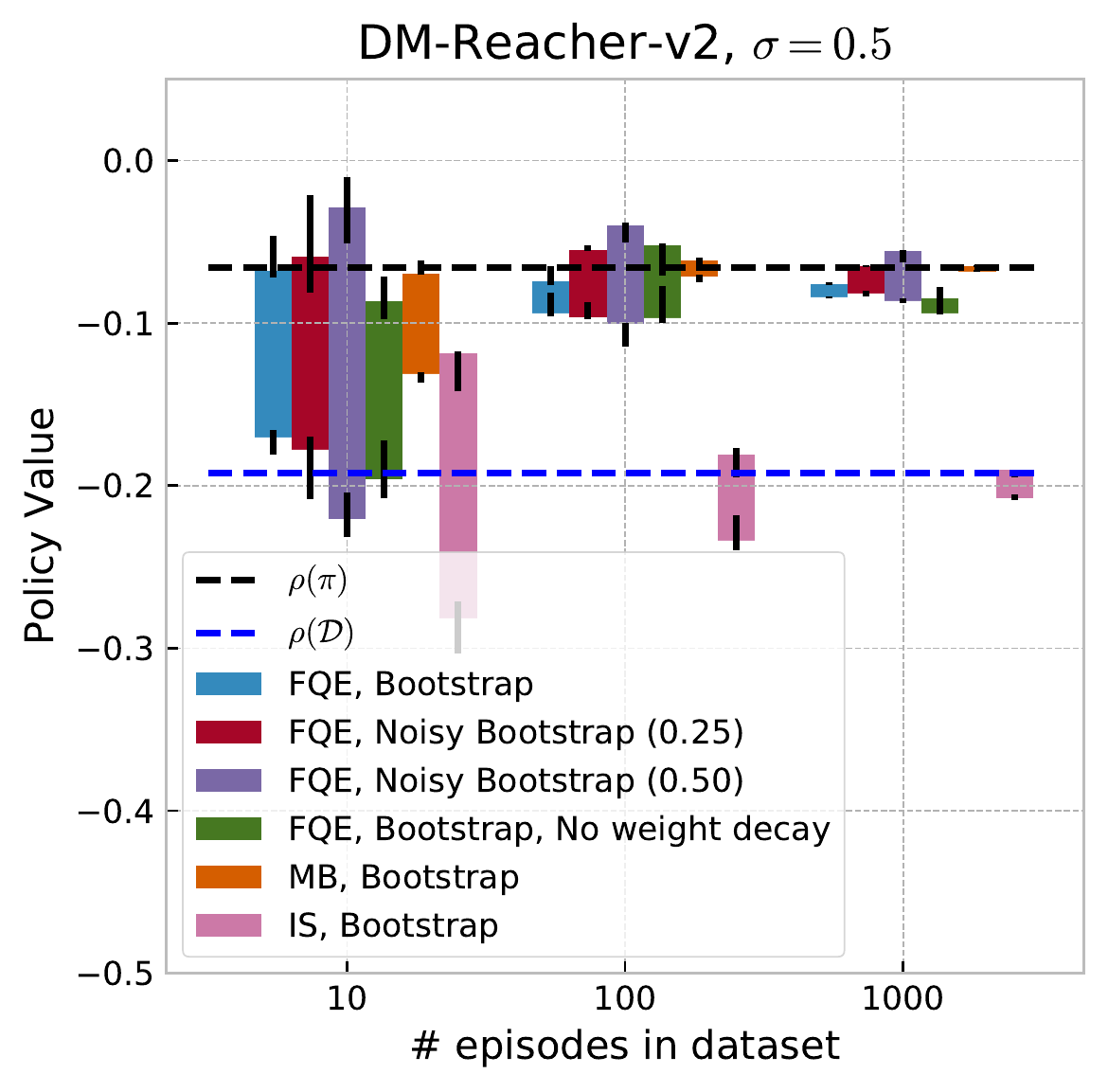}
 \includegraphics[scale=0.35]{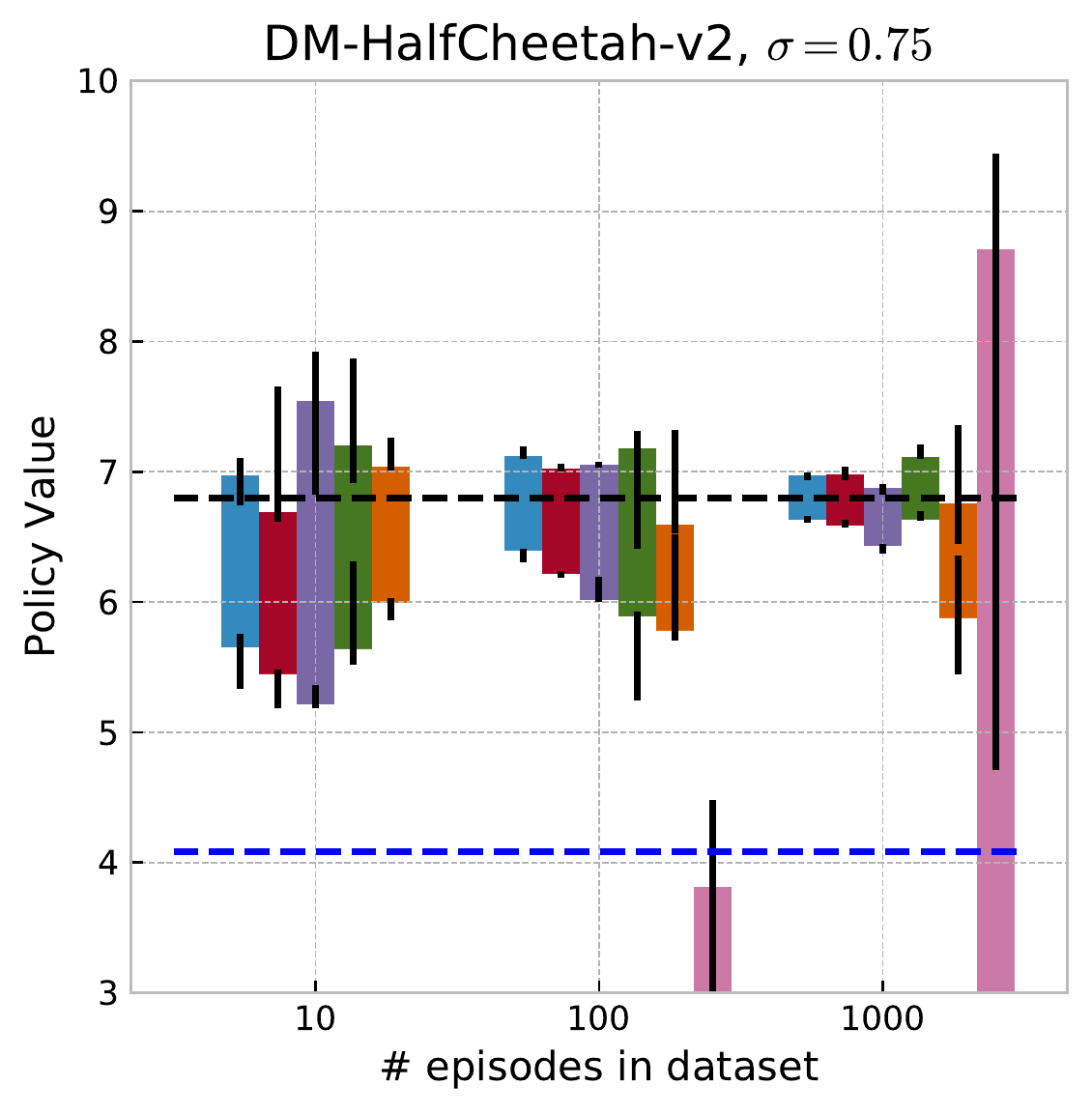}
\includegraphics[scale
=0.35]{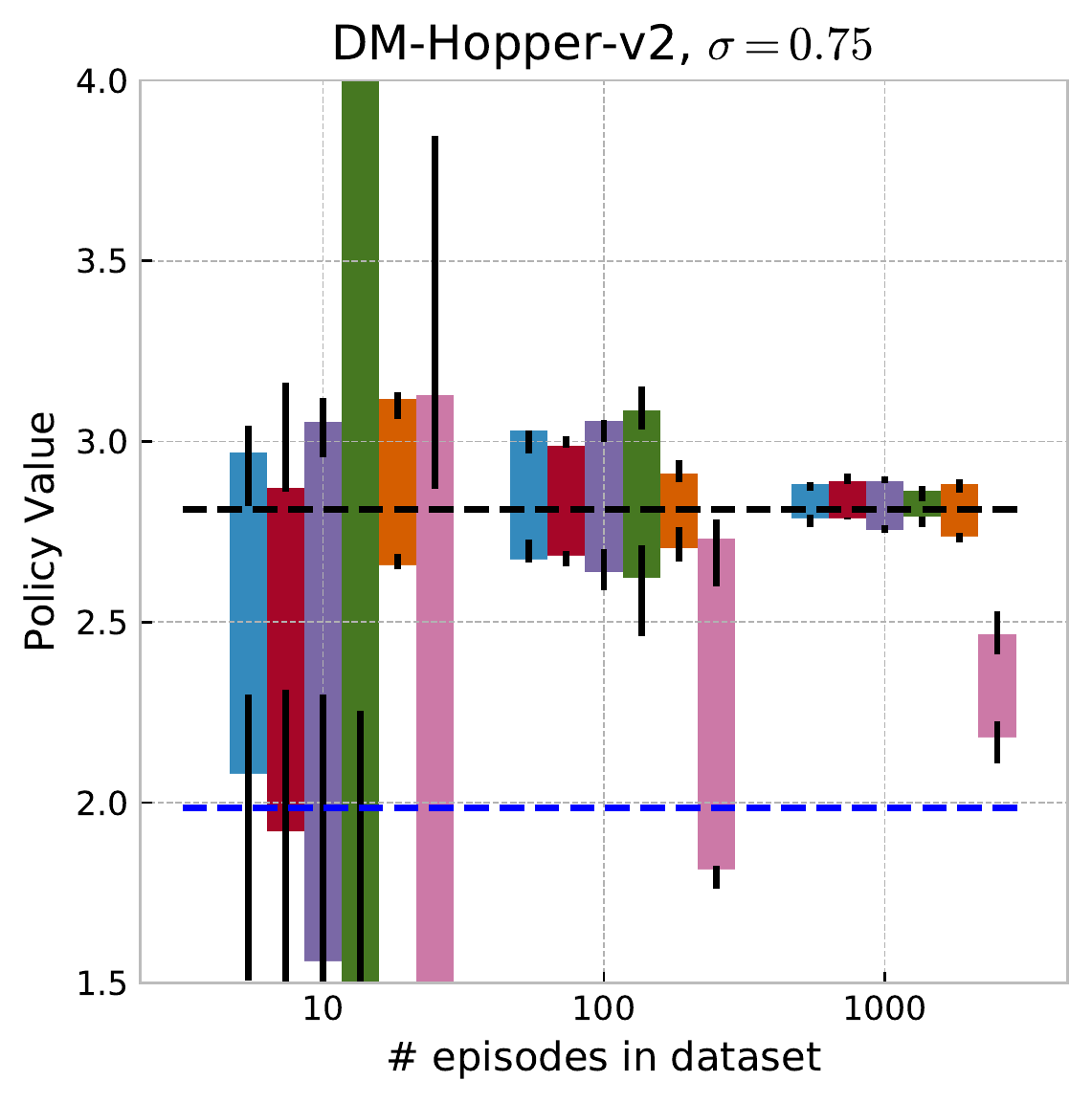}
\end{center}
\vspace{-5mm}
 \caption{Policy evaluation on continuous domains. For all methods, we plot estimated 95\% confidence intervals. For lower and upper bounds we plot a median and $25$th and $75$th percentiles (black vertical lines) over 5 seeds. We also plot values for the target policy value $\rho(\pi)$  and the behavior policy value $\rho(\mathcal{D})$. For FQE with noisy boostrapping, the noise scale corresponds to a coefficient applied to the standard deviation of observed rewards in the dataset. 
 Some of the variants (FQE without weight decay, IS bootstrap) at times produce intervals which are wholly outside the plotted range.
 }
 \label{fig:mujoco}
 \end{figure}

 \subsection{Continuous Control Tasks}
We now evaluate the use of bootstrapping on continuous control tasks from OpenAI gym~\cite{brockman2016openai}. Due to high computational demands, we focus on Reacher, HalfCheetah, and Hopper. We follow a protocol similar to \cite{nachum2019dualdice}. First, we generate a near-optimal policy by training SAC~\cite{haarnoja2018soft}. The target policy $\pi$ is set to be this near-optimal policy with fixed variance $\sigma^2=0.01$.
The datasets $\Dnset$ are sampled via a sub-optimal policy derived from the near-optimal policy with variance replaced with a fixed quantity $\sigma^2$ ($\sigma=0.5$ or $\sigma=0.75$ depending on the task). %
We train all networks for one million steps using stochastic gradient descent via the Adam optimizer~\cite{kingma2014adam} with learning rate $3\cdot10^{-4}$ and a minibatch size of $256$. 
As a form of regularization to combat issues with sufficient coverage, we apply weight decay (L2 regularization) equal to $10^{-5}$ to all methods, unless otherwise specified.

We present the computed intervals of FQE and MB in Figure~\ref{fig:mujoco}.
Focusing first on the effect of reward noise, we look at the ablation presented by the three FQE variants in these plots (see the appendix for an ablation over MB variants).
In extreme low data regimes (10 trajectories), the variance of vanilla FQE intervals is large and coverage of the true value suffers (especially for Reacher). 
With increased reward noise scales, coverage of the true value improves, but at the cost of a wider interval at times.

Next, we consider the issue of sufficient coverage.
By default, we apply L2 regularization to FQE. In Figure~\ref{fig:mujoco} we present a variant without L2 regularization.
We find the absence of this regularization to have a detrimental effect on performance. At times, the intervals computed by unregularized FQE are so inaccurate that they are outside the range of the plot.
We find that the regularized version of FQE exhibits more stable performance.
We found regularization of MB to also be crucial. The MB method plotted here uses L2 regularization on the weights and clips states and rewards generated during model-based rollouts. 
Although not plotted, we found that without these regularizations, the MB bootstrap intervals diverge. 
In some instances, we can see the consequences of these strong regularizations in terms of biased intervals that do not cover the true value, such as in HalfCheetah.

Overall, we conclude that DM approaches when using bootstrapping and our proposed mechanisms can yield strong performance in these difficult domains. 
Between FQE and MB, FQE appears to be better suited for these domains, although both methods show substantial improvement over existing approaches (IS with bootstrapping).\footnote{DR with bootstrap produces even worse intervals, and so we do not plot it.}

\section{Conclusion}
We have investigated the validity of Efron's bootstrap for computing confidence intervals with respect to the direct method (DM) for off-policy evaluation.
Our theoretical results show that Efron's bootstrap is valid given that specific conditions -- sufficient data size and sufficient coverage -- are satisfied.
While these conditions are often not satisfied in practice, there are a number of heuristic mechanisms that can be employed to mitigate their effects, although at a cost of overly conservative or biased intervals.
Still, empirically we find that these mechanisms can be used to yield impressive performance for OPE in challenging environments.
In the future, we hope to use the ideas and techniques presented here and apply them to policy optimization problems, where safety is also a key concern.

\section*{Broader Impact}
Our work focuses on the practically relevant problem of off-policy evaluation. Interestingly, our work reveals the potential issues with applying a well-known technique -- Efron's bootstrap -- without considering its validity.
Our work shows that Efron's bootstrap may often not be valid.
Although we propose mechanisms to remedy this, our solutions are not fool-proof. In a practical setting, where many of our assumptions may not hold, one must take special care when applying our method to mitigate risks of failure.

\begin{ack}
Thanks to Jonathan Tompson, Andy Zeng, Branislav Kveton, and others at Google Research for contributing helpful thoughts and discussions.
\end{ack}

\bibliography{references}
\bibliographystyle{plain}

\newpage
\appendix
\section{Proofs}

\subsection{Hadamard Differentiability}
We provide a definition of Hadamard differentiability, which is a key property for showing validity of Efron's bootstrap.
The following is paraphrased from~\cite{wasserman2006all}.
\begin{definition}
Suppose $F$ is a functional mapping distributions over $\mathcal{P}\defeq \Sset\times \Sset\times\Aset\times\R\times\Sset$ (i.e., distributions of tuples $(s_0,s,a,r,s')$) to $\R$.
Denote $\mathcal{P}_L$ as the the linear space generated by $\mathcal{P}$.
The functional $F$ is said to be \textbf{Hadamard differentiable} at $\visitDtilde\in\mathcal{P}$ if there exists a linear functional $L_{\Dtilde}$ on $\mathcal{P}_L$
such that for any $\epsilon_n \to 0$ and
$P, P_1,P_2,P_3,\dots \in \mathcal{P}_L$ such that $\|P_n - P\|_\infty \to 0$ and $\visitDtilde + \epsilon_n P_n \in \mathcal{P}$,
\begin{equation}
    \lim_{n\to\infty} \left|
    \frac{F(\visitDtilde +\epsilon_n P_n) - F(\visitDtilde)}{\epsilon_n} - L_{\Dtilde}(P)
    \right| = 0.
\end{equation}
\end{definition}

\subsection{Proof of Theorem~\ref{theorem:qe}}
\label{sec:proof1}
As in the main text, we use $\initDtilde,\RewardDtilde,\TransitionDtilde$ to denote the initial state, conditional reward, and conditional transition distributions observed in $\visitDtilde$.
Furthermore, let $\AvgRewardDtilde = \E_{\RewardDtilde}[r]$.
For ease of notation, we will use matrix notation and assume finite state and action spaces (an extension to Hilbert spaces with linear operators is straightforward). The functional $F$ may be expressed as,
\begin{equation}
    \label{eq:def-f}
    F(\visitDtilde) \defeq (1-\gamma)\cdot \AvgRewardDtilde^{T} (I - \gamma \Pi \TransitionDtilde)^{-1} \Pi\initDtilde,
\end{equation}
where we use $\Pi$ to denote the matrix mapping distributions over states to distributions over state actions, where actions as sampled by $\pi$.

Note that, assuming $\visitpi(s,a)>0\Rightarrow \visitrb(s,a)>0$, the components of this expression for $F$ at $\visitDtilde=\visitrb$ yield the $Q^\pi$-values and on-policy distribution $\visitpi$. Specifically,
\begin{align}
    \label{eq:visitpi-matrix}
    \visitpi &= (1-\gamma)(I - \gamma \Pi \Transition)^{-1} \Pi\init, \\
    \label{eq:qpi-matrix}
    Q^\pi &= \overline{\Reward}^{T} (I - \gamma \Pi \Transition)^{-1}.
\end{align}
For general $\visitDtilde$, these expressions will yield $\visitpitilde$, the on-policy distribution in the empirical MDP, and $\Qpitilde$, the $Q^\pi$ values in the empirical MDP, respectively.

As mentioned in the proof sketch, the validity of Theorem~\ref{theorem:qe} rests on the Hadamard differentiability of $F(\visitDtilde)$ for all $\visitDtilde$ in a neighborhood around $\visitrb$.
In addition to local Hadamard differentiability, one must also have that the derivative linear functional $L_{\Dset}$ satisfy
\begin{equation}
    0 < \E_{(s_0,s,a,r,s')\sim\visitrb}\left[L_{\Dset}(\delta_{(s_0,s,a,r,s')}-\visitrb)^2\right] < \infty.
\end{equation}
See Theorems 3.19 and 3.21 in~\cite{wasserman2006all} for more information.
In the text below, we will show that $F$ is indeed Hadamard differentiable with derivative satisfying
\begin{equation}
    L_{\Dset}(\delta_{(s_0,s,a,r,s')}-\visitrb) = O(\visitpi(s,a) / \visitrb(s,a)).
    \label{eq:linear-func}
\end{equation}
The result~\eqref{eq:linear-func} in conjunction with Assumption~\ref{assumption:bounded_ratios} will immediately make it clear that $\E_{(s_0,s,a,r,s')\sim\visitrb}\left[L_{\Dset}(\delta_{(s_0,s,a,r,s')}-\visitrb)^2\right] < \infty$. Moreover, the linear nature of the functional $F$ with respect to $\RewardDtilde$ makes it clear that $0<\E_{(s_0,s,a,r,s')\sim\visitrb}\left[L_{\Dset}(\delta_{(s_0,s,a,r,s')}-\visitrb)^2\right]$, thus showing the validity of the bootstrap.

We now continue to characterize the linear functional $L_{\Dset}$. We will first derive, via standard Frechet differentiation, the derivatives of $F(\visitDtilde)$ with respect to $\AvgRewardDtilde$, $\initDtilde$, and $\TransitionDtilde$, for $\visitDtilde$ that satisfy Assumption~\ref{assumption:bounded_rewards}. We will later use these results in conjunction with Assumption~\ref{assumption:bounded_ratios} to show the Hadamard differentiability of $F$ with respect to $\visitDtilde$ in a ball around $\visitrb$.
\begin{itemize}
    \item $\AvgRewardDtilde$: It is clear from~\eqref{eq:visitpi-matrix} that $\partial F / \partial \AvgRewardDtilde = \visitpitilde$.
    \item $\initDtilde$: It is clear from~\eqref{eq:qpi-matrix} that $\partial F / \partial \initDtilde = (1-\gamma)\Qpitilde \Pi$.
    \item $\TransitionDtilde$: This derivation is not as trivial as the previous two. Still, it may be approached in a straightforward manner by utilizing the policy gradient theorem~\cite{sutton2000policy}. Although the policy gradient theorem is typically used to derive gradients of $F$ with respect to $\Pi$, we may apply it here, interpreting $\TransitionDtilde$ as the stationary ``policy'' whose gradient we wish to calculate (``transitions'' are now between state-action pairs and the ``actions'' are choices of next states). Specifically, we may re-write~\eqref{eq:def-f} as
    \begin{equation}
        K + (1-\gamma)\cdot(\AvgRewardDtilde^T\Pi) (I - \gamma \TransitionDtilde\Pi)^{-1} \TransitionDtilde (\Pi\initDtilde),
    \end{equation}
    where $K$ is constant with respect to $\TransitionDtilde$.
    This way, we deduce that $\frac{\partial F}{\partial \TransitionDtilde(s'|s,a)} = \visitpitilde(s,a) \cdot \E_{a'\sim\pi(s')}[\Qpitilde(s',a')]$ for all $s,a,s'$.
\end{itemize}

With these three partial derivatives calculated, we may continue to show differentiability of $F$ in a neighborhood around $\visitrb$.
Without loss of generality, we assume that $\visitpi$ has full support; if not, we may simply ignore all tuples outside of the support, since they do not affect $\avgstep(\pi)$ or $\avgstephat(\pi)$ (note that by Assumption~\ref{assumption:bounded_ratios} this means $\visitrb$ also has full support).

Now we continue to characterize the derivative of $F$. Denote the derivative of $F$ by $F'$, where $F'(\visitDtilde)$ is defined to be the linear functional satisfying,
\begin{equation}
    \label{eq:directional-deriv}
    \hspace{-2mm}
    \left\langle F'(\visitDtilde),
    \delta_{(s_0^*,s^*,a^*,r^*,s^{\prime*})} - \visitDtilde \right\rangle = \lim_{t\to0} \frac{1}{t}\left(F((1-t)\cdot\visitDtilde + t\cdot\delta_{(s_0^*,s^*,a^*,r^*,s^{\prime*})}) - F(\visitDtilde)\right),
    \hspace{-2mm}
\end{equation}
for all tuples $(s_0^*,s^*,a^*,r^*,s^{\prime*})$.
We analyze the behavior of these directional limits. 
We again split our analysis into three parts:
\begin{itemize}
    \item Influence of $r^*$. The influence of $r^*$ is in the empirical average reward function at $s^*,a^*$: $\AvgRewardDtilde(s^*,a^*)$. At a change of $t$, this value is updated to
    \begin{equation}
        \label{eq:rr-deriv}
        \frac{(1-t)\visitDtilde(s^*,a^*) \AvgRewardDtilde(s^*,a^*)+t r^*}{(1-t)\visitDtilde(s^*,a^*) + t}.
    \end{equation}
    The derivative of this expression at $t=0$ is $\frac{-\AvgRewardDtilde(s^*,a^*) + r^*}{\visitDtilde(s^*,a^*)}$. Combined with the partial derivative computed earlier, we find the total influence on $F$ is $\frac{\visitpitilde(s^*,a^*)}{\visitDtilde(s^*,a^*)} (-\AvgRewardDtilde(s^*,a^*) + r^*)\cdot t$ as $t\to0$. 
    
    \item Influence of $s_0^*$. The influence of $s_0^*$ is in the empirical initial state distribution $\initDtilde$, which is updated to $(1-t)\initDtilde + t\delta_{s_0^*}$. To deduce the influence on $F$, we combine with the partial derivative computed earlier, and find the change in $F$ to be $\left(-\avgstephat(\pi|\visitDtilde) + (1-\gamma)\E_{a_0\sim\pi(s_0^*)}[\Qpitilde(s_0^*,a_0)]\right)\cdot t$.
    
    \item Influence of $s^{\prime*}$. As for the reward, the influence here is in the empirical transition probabilities $\TransitionDtilde(s' | s^*,a^*)$, which is updated to
    \begin{equation}
        \label{eq:tt-deriv}
        \frac{(1-t)\visitDtilde(s^*,a^*)\TransitionDtilde(s'|s^*,a^*) + t\delta_{s^{\prime*}}(s')}{(1-t)\visitDtilde(s^*,a^*) + t}.
    \end{equation}
    The derivative of this expression at $t=0$ is $\frac{-\TransitionDtilde(s'|s^*,a^*) + \delta_{s^{\prime*}}(s')}{\visitDtilde(s^*,a^*)}$. Combining this with the known partials of $F$ with respect to $\TransitionDtilde$, we find that the total influence on $F$ is $\frac{\visitpitilde(s^*,a^*)}{\visitDtilde(s^*,a^*)} \left(-\E_{s'\sim\TransitionDtilde(s^*,a^*),a'\sim\pi(s')}[\Qpitilde(s',a')] + \E_{a'\sim\pi(s^{\prime*})}[\Qpitilde(s^{\prime*},a')]\right)\cdot t$ as $t\to0$
\end{itemize}
We may see that each of these influences on $F$ are linear in $t$. 
By Assumption~\ref{assumption:bounded_rewards}, $r^*$ is uniformly bounded, as are $\AvgRewardDtilde$ and $\Qpitilde$.
Thus, in conjunction with the Riesz representation theorem, we deduce that the derivative $F'$ satisfies
\begin{equation}
    \label{eq:f-deriv}
    ||F'(\visitDtilde)||_\infty = O\left( \left|\left|\frac{\visitpitilde}{\visitDtilde}\right|\right|_\infty\right).
\end{equation}
Now consider an arbitrary distribution $\visitEtilde$ and the directional limit
\begin{equation}
    \lim_{t\to0}\frac{1}{t}\left( F((1-t)\cdot\visitDtilde + t\cdot\visitEtilde) - F(\visitDtilde)\right).
\end{equation}
Analogous to the derivations above, we may find,
\begin{itemize}
    \item The empirical average reward $\AvgRewardDtilde(s, a)$ at a change of $t$ is updated to
    \begin{equation}
        \label{eq:rr-deriv2}
        \frac{(1-t)\visitDtilde(s,a) \AvgRewardDtilde(s,a)+t\visitEtilde(s,a)\AvgRewardEtilde(s,a)}{(1-t)\visitDtilde(s,a) + t\visitEtilde(s,a)}.
    \end{equation}
    \item The empirical initial state distribution at a change of $t$ is updated to
    \begin{equation}
        \label{eq:ii-deriv2}
        (1-t)\initDtilde + t\initEtilde.
    \end{equation}
    \item The empirical transition probabilities $\TransitionDtilde(s'|s,a)$ at a change of $t$ are updated to
    \begin{equation}
        \label{eq:tt-deriv2}
        \frac{(1-t)\visitDtilde(s,a)\TransitionDtilde(s'|s,a) + t\visitEtilde(s,a)\TransitionEtilde(s'|s,a)}{(1-t)\visitDtilde(s,a) + t\visitEtilde}.
    \end{equation}
\end{itemize}
By considering the limits of~\eqref{eq:rr-deriv2},~\eqref{eq:ii-deriv2},~\eqref{eq:tt-deriv2} as $t\to0$, it is clear that 
\begin{equation}
    \label{eq:directional-deriv2}
    \left\langle F'(\visitDtilde),
    \visitEtilde - \visitDtilde \right\rangle = \lim_{t\to0} \frac{1}{t}\left(F((1-t)\cdot\visitDtilde + t\cdot\visitEtilde) - F(\visitDtilde)\right).
\end{equation}
To show Hadamard differentiability, we invoke Assumption~\ref{assumption:bounded_ratios}, which implies that there exists a sufficiently small $\zeta=\epsilon/2$ such that the $L_\infty$ ball centered at $\visitrb$ with radius $\zeta$ has uniformly bounded $||\visitpi/\visitDtilde||_\infty$. 
Since the support of $\visitpitilde$ is contained within the support of $\visitpi$, this means that the same ball has uniformly bounded $||\visitpitilde/\visitDtilde||_\infty$.
Moreover, it is clear that within this ball $\visitDtilde>\epsilon/2$ uniformly, and so the directional derivatives of~\eqref{eq:rr-deriv2},~\eqref{eq:ii-deriv2}, and~\eqref{eq:tt-deriv2} converge uniformly with $t\cdot\|\visitEtilde\|_\infty$.
Thus, there exists a sufficiently small ball around $\visitrb$ within which $F$ is Hadamard differentiable. This completes our proof.

\subsection{Proof of Theorem~\ref{theorem:bounded_ratios}}
First, a brief sketch: If Assumption~\ref{assumption:bounded_ratios} does not hold, then for any $L_\infty$ ball, one may find a distribution near $\visitrb$ outside of the support of $\pi$, and this will cause discontinuities in $F$. 

Now more concretely: Consider an MDP with state space $\{s_{\mathrm{start}}, s_{\mathrm{term}},  s_1,s_2,\dots\}$. The MDP's initial state distribution is $\init \defeq \delta_{s_{\mathrm{start}}}$.
The MDP has a single action $a$ and the transition function is defined as,
\begin{align}
    \Transition(s_n|s_{\mathrm{start}},a) & = \frac{6}{\pi^2 n^2}, \\
    \Transition(s_{\mathrm{start}}|s_{\mathrm{start}},a) & = 0, \\
    \Transition(s_{\mathrm{term}}|s_{\mathrm{start}},a) & = 0, \\
    \Transition(s_n, a) &= \delta_{s_{\mathrm{term}}}, \\
    \Transition(s_{\mathrm{term}}, a) &= \delta_{s_{\mathrm{term}}}
\end{align}
The reward function is defined as
\begin{align}
    \Reward(s_{\mathrm{start}},a) &= \delta_0, \\
    \Reward(s_n,a) &= \delta_0, \\
    \Reward(s_{\mathrm{term}},a) &= \delta_1.
\end{align}
Define prior reward and transition functions
\begin{align}
    \Transitionprior(s, a) &\defeq \delta_{s_{\mathrm{term}}}, \\
    \Rewardprior(s, a) &\defeq \delta_1.
\end{align}
Let policy $\pi$ be a policy on this MDP (there exists only one).
Consider $\gamma=0.5$.
Thus we have
\begin{equation}
    \avgstep(\pi) = \frac{1}{4},
\end{equation}
\begin{equation}
    \visitpi(s_n, a) = \frac{3}{2\pi^2 n^2}.
\end{equation}
Let $\visitrb$ be defined as $\visitrb \defeq \visitpi$.
It is clear that $\visitrb$ satisfies $\|\visitpi / \visitrb\|_\infty = 1 < \infty$ but that Assumption~\ref{assumption:bounded_ratios} does not hold.

Now consider any $L_\infty$ ball around $\visitrb$.
Suppose this ball has radius $\zeta>0$ and let $N$ be such that $\frac{3}{2\pi^2 N^2} < \zeta$. We may define the distribution
\begin{equation}
    \visitDtilde \defeq \visitrb - \frac{3}{2\pi^2 N^2} \delta_{(s_{\mathrm{start}}, s_N, a, 0, s_{\mathrm{term}})} + \frac{3}{2\pi^2 N^2} \delta_{(s_{\mathrm{start}}, s_{\mathrm{term}}, a, 1, s_{\mathrm{term}})}.
\end{equation}
It is clear that $\visitDtilde$ is within the $L_\infty$ ball and $\visitDtilde(s_N, a) = 0$.
Thus, $\RewardDtilde(s_N,a)=\Rewardprior$ and so
\begin{equation}
    F(\visitDtilde) = \frac{1}{4} + \frac{3}{2\pi^2 N^2}.
\end{equation}
Now we define
\begin{equation}
    P \defeq \delta_{(s_{\mathrm{start}}, s_1, a, 0, s_{\mathrm{term}})} - \visitDtilde,
\end{equation}
\begin{equation}
    \epsilon_n \defeq \frac{1}{n}.
\end{equation}
It is clear that a change $\visitDtilde\to \visitDtilde + \epsilon_n\cdot P$ would not change the empirical reward or transition functions, and so we have,
\begin{equation}
    \lim_{n\to\infty} \frac{1}{\epsilon_n}(F(\visitDtilde + \epsilon_n\cdot P) - F(\visitDtilde)) = 0.
\end{equation}
We may also consider a sequence $\{P_n\}_{n=1}^\infty$ defined as
\begin{equation}
    P_n\defeq \frac{1}{n}\cdot\delta_{(s_{\mathrm{start}}, s_N, a, 0, s_{\mathrm{term}})} + \left(1-\frac{1}{n}\right)\delta_{(s_{\mathrm{start}}, s_1, a, 0, s_{\mathrm{term}})} - \visitDtilde.
\end{equation}
Clearly $\lim_{n\to\infty} P_n = P$. However, $P_n$ changes the empirical reward distribution at $(s_N, a)$, and this causes
\begin{equation}
    \lim_{n\to\infty} \frac{1}{\epsilon_n} (F(\visitDtilde + \epsilon_n\cdot P_n) - F(\visitDtilde)) = \lim_{n\to\infty} \frac{1}{\epsilon_n}\cdot\frac{3}{2\pi^2 N^2} = \infty.
\end{equation}
Thus, $F$ is not Hadamard differentiable at $\visitDtilde$.

\subsection{Proof of Theorem~\ref{theorem:regularized}}
We prove a more useful generalization of this theorem, stated below:
\paragraph{Generalized Theorem~\ref{theorem:regularized}}
Suppose $\RewardBDtilde,\TransitionBDtilde$ are reward and transition probability functions defined with respect to general distributions $\visitDtilde$ and that these functions are differentiable with respect to $\visitDtilde$ in a neighborhood around $\visitrb$ with uniformly bounded derivatives.
Under Assumption~\ref{assumption:bounded_rewards}, the use of Algorithm~\ref{alg:bootstrap} with $F(\visitDn)\defeq \avgstepbiased(\pi|\Dnset)$ yields confidence intervals $C(\visitDn)$ which are asymptotically correct, in the sense that
\begin{equation}
    \Pr[\avgstepbiased(\pi|\visitrb)\in C(\visitDn)] = 1 - \alpha - O_p(n^{-1/2}).
\end{equation}
As for Theorem~\ref{theorem:qe}, the one-sided intervals converge at a rate $O_p(n^{-1/2})$ and these rates may be improved by using more sophisticated bootstrapping methods.

\paragraph{Proof}
The proof is straightforward given the derivations in Section~\ref{sec:proof1}.
Specifically, analogous to Section~\ref{sec:proof1} one may readily show that
\begin{align}
    \partial F / \partial \overline{\Reward}_{\Dtilde}^\kappa &= \visitpitilde \\
    \frac{\partial F}{\partial \TransitionDtilde^\kappa(s'|s,a)} &= \visitpitilde(s,a) \cdot \E_{a'\sim\pi(s')}[\Qpitilde(s',a')].
\end{align}
Using chain rule with the assumption of differentiability of $\RewardBDtilde,\TransitionBDtilde$ then immediately shows that $F'$ is well-defined and thus $F$ is appropriately differentiable around $\visitrb$.

\newpage
\subsection{Additional Experiments}

In this section we provide additional experiments for Model Based Policy Evaluation. In particular, we demonstrate
that the scale of the noise has a similar effect for MB policy evaluation as for Fitted Q-Evaluation (see Figure \ref{app:mujoco}).

 \begin{figure}[h]
 \begin{center}
  \includegraphics[scale=0.375]{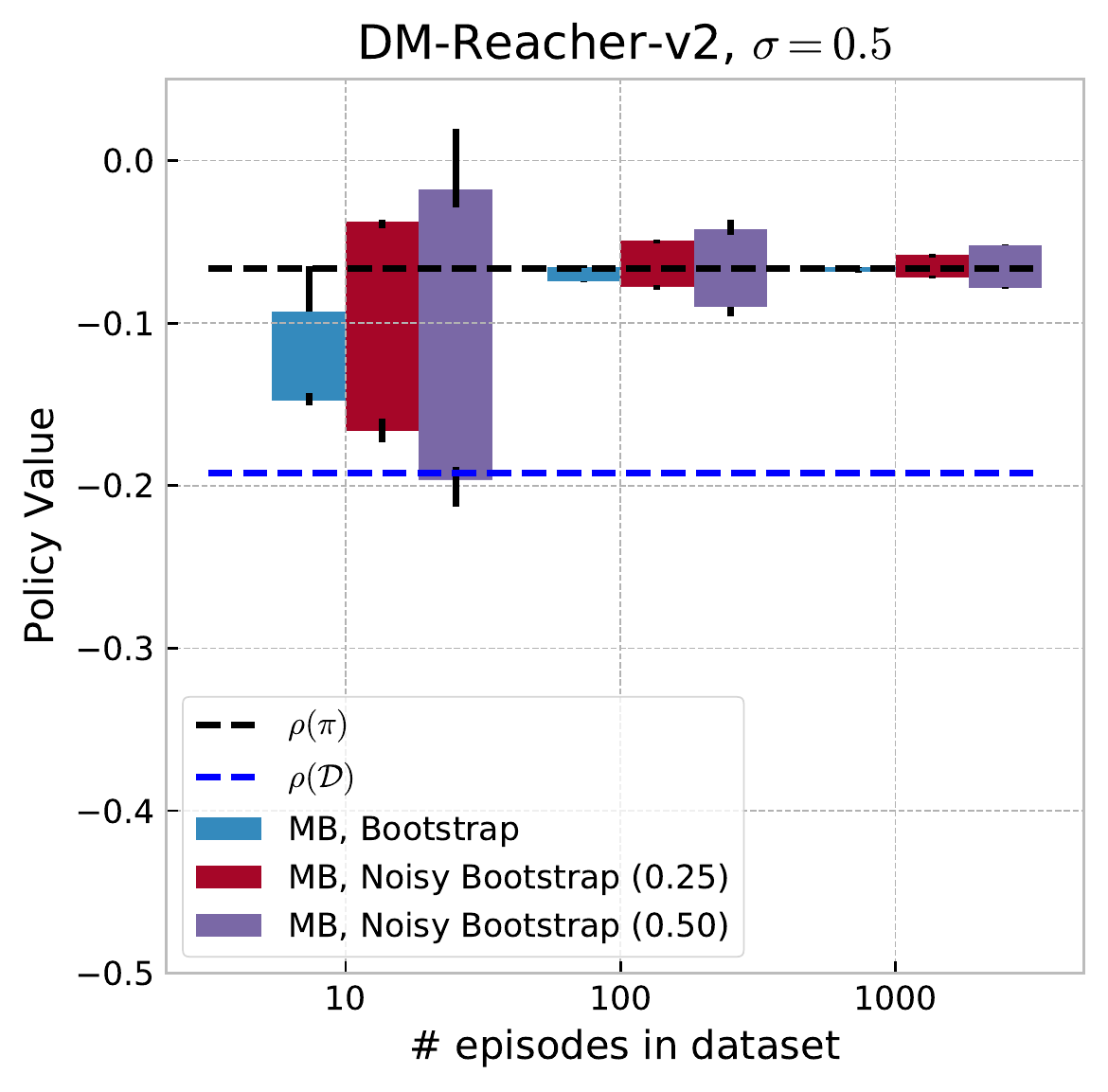}
 \includegraphics[scale=0.375]{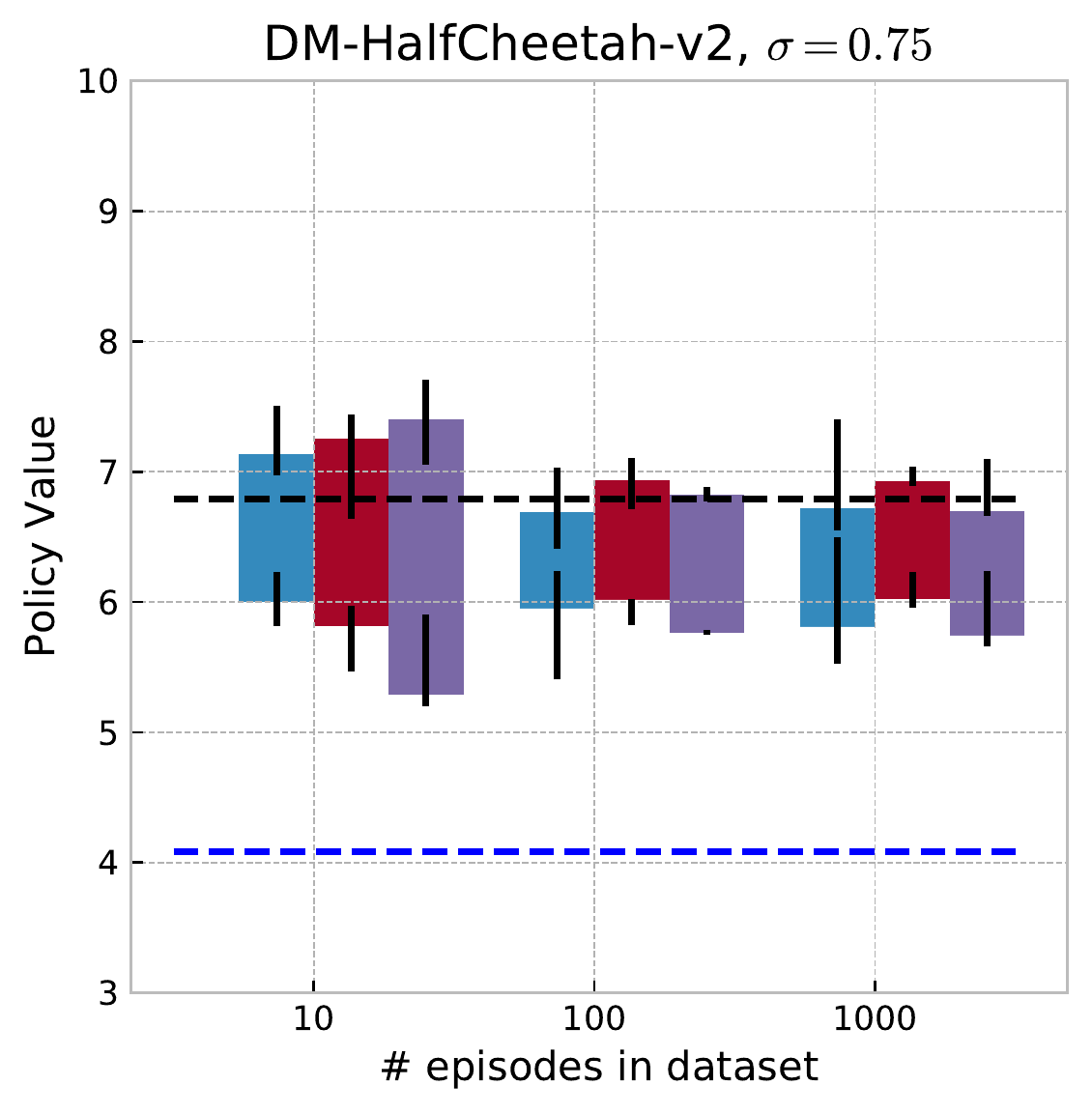}
\includegraphics[scale
=0.375]{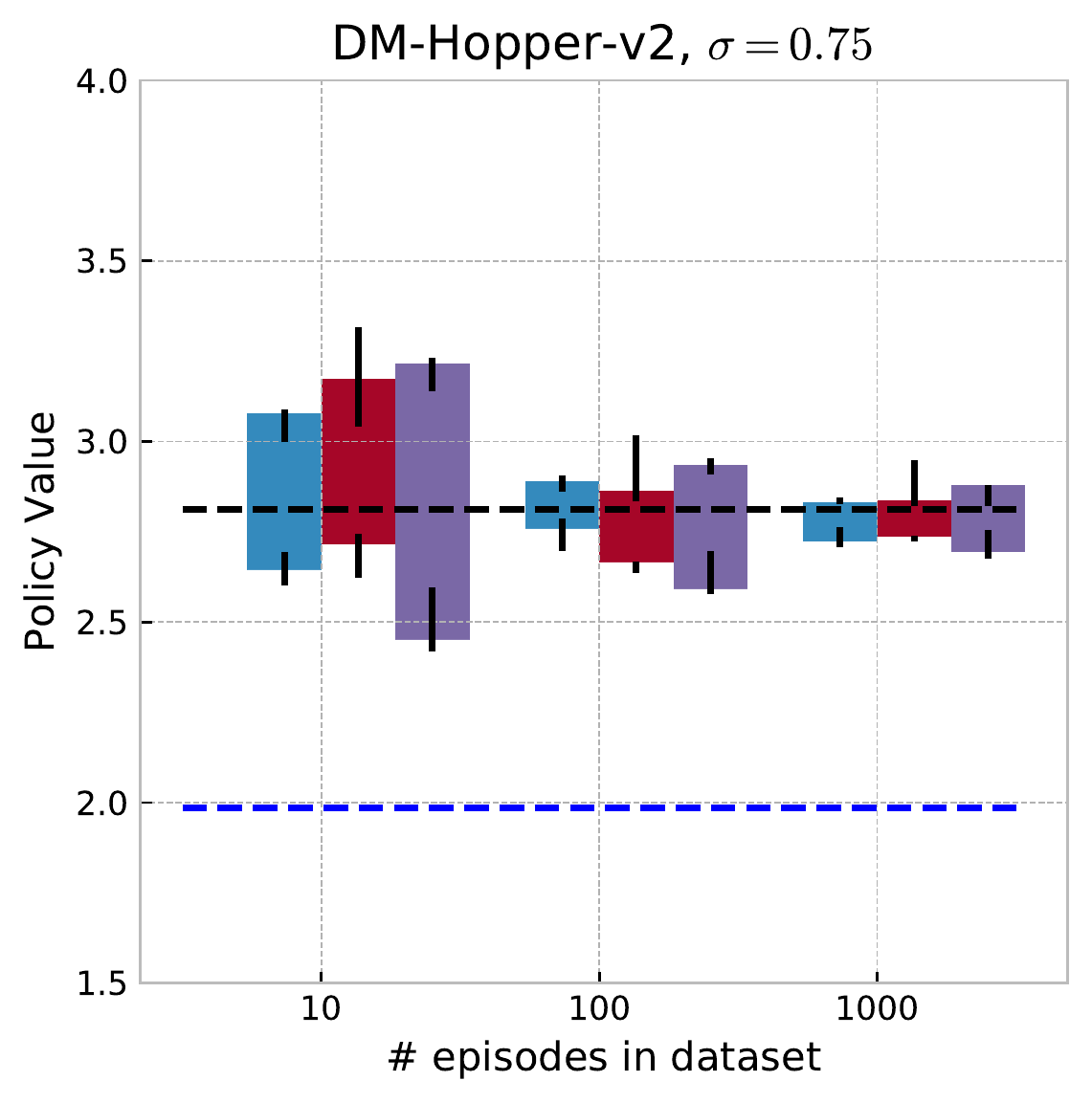}
\end{center}
 \caption{Additional results for MB. We plot different the confidence interval for different values of the noise scale.}
 \label{app:mujoco}
 \end{figure}
 
 \subsection{Experimental Details}
 
 For the ease of reproducibility we provide details of our experimental setup.  For all methods we normalize the states and rewards to have mean of $0$ and standard deviation of $1$. We normalize the terminating rewards accordingly. For all neural networks we use orthogonal initialization.
 
 \paragraph{Fitted Q-Evaluation}  We use 2 layer MLP with 256 hidden units and perform standard TD-$0$ policy evaluation. In order to compute the target value for FQE, we use target networks that are updated using Polyak averaging with $\tau=0.005$ as in \cite{lillicrap2015continuous}. For the results we plot predictions from the target network.
\paragraph{Model based policy evaluation} We perform model based policy evaluation as described in \cite{hanna2016high}. We found that to make the algorithm stable in low data regime, it is crucial to apply L2 regularization and clip states and rewards generated by the models to the limits observed in the training data. The forward model predicts offset from the current state: $f_\theta(s, a) \mapsto s' - s$ and trained by optimizing a mean squared error $\cfrac{1}{N}\sum_{i=1}^N(f_\theta(s_i, a_i) + (s_i - s'_i))^2$, while for rewards we train a model that regresses rewards directly $\cfrac{1}{N}\sum_{i=1}^N(g_\theta(s_i, a_i) - r_i)^2$. We also train a model that predicts terminating condition via binary classification.

\end{document}